\DeclareMathAlphabet{\mathpzc}{OT1}{pzc}{m}{it}
\definecolor{puorange}{rgb}{0.80,0.20,0}
\definecolor{bluegray}{rgb}{0.04,0,0.7}
\definecolor{darkbrown}{rgb}{0.40,0.2,0.05}
\definecolor{orcidlogocol}{HTML}{A6CE39}
\tikzset{
  orcidlogo/.pic={
    \fill[orcidlogocol] svg{M256,128c0,70.7-57.3,128-128,128C57.3,256,0,198.7,0,128C0,57.3,57.3,0,128,0C198.7,0,256,57.3,256,128z};
    \fill[white] svg{M86.3,186.2H70.9V79.1h15.4v48.4V186.2z}
                 svg{M108.9,79.1h41.6c39.6,0,57,28.3,57,53.6c0,27.5-21.5,53.6-56.8,53.6h-41.8V79.1z M124.3,172.4h24.5c34.9,0,42.9-26.5,42.9-39.7c0-21.5-13.7-39.7-43.7-39.7h-23.7V172.4z}
                 svg{M88.7,56.8c0,5.5-4.5,10.1-10.1,10.1c-5.6,0-10.1-4.6-10.1-10.1c0-5.6,4.5-10.1,10.1-10.1C84.2,46.7,88.7,51.3,88.7,56.8z};
  }
}
\newcommand\orcidicon[1]{\href{https://orcid.org/#1}{\mbox{\scalerel*{
\begin{tikzpicture}[yscale=-1,transform shape]
\pic{orcidlogo};
\end{tikzpicture}
}{|}}}}
\newtheorem{theorem}{Theorem}[section]
\newtheorem{proposition}[theorem]{Proposition}
\newtheorem{definition}[theorem]{Definition}
\newtheorem{example}[theorem]{Example}
\theoremstyle{remark}
\newcommand*{\Avgfeat}{\Phi}
\newcommand*{\entreg}{\nu}
\newcommand*{\lagr}{\gamma}
\newcommand*{\n}{n}
\DeclareMathOperator{\Proj}{P}
\newcommand*{\pinv}{+}
\newcommand*{\reg}{\zeta}
\newcommand*{\regtwo}{\lambda}
\newcommand*{\Regtwo}{\Lambda}
\newcommand*{\x}{x}
\newcommand*{\y}{y}
\newcommand{\obj}{F}
\newcommand{\ulr}{\text{ulr}}
\DeclareMathOperator*{\Tr}{tr}
\DeclareMathOperator*{\tr}{tr}
\DeclareMathOperator*{\diag}{diag}
\newcommand*{\mbr}{\mathbb{R}}
\newcommand*{\mcc}{\mathcal{C}}
\newcommand*{\mck}{\mathcal{K}}
\newcommand*{\mcl}{\mathcal{L}}
\newcommand*{\mcr}{\mathcal{R}}
\newcommand*{\mcs}{\mathcal{S}}
\newcommand*{\mcu}{\mathcal{U}}
\newcommand*{\mbone}{\mathbbm{1}}
\newcommand{\reals}{\mathbb{R}}
\newcommand{\ones}{\mathbbm{1}}
\newcommand{\id}{\operatorname{ I}}
\newcommand{\Id}{\operatorname{ I}}
\begin{document}

\title{Discriminative Clustering with Representation Learning\\with any Ratio of Labeled to Unlabeled Data}

\author{Corinne Jones\textsuperscript{1}, \quad
        Vincent Roulet\textsuperscript{2},  \quad
        Zaid Harchaoui\textsuperscript{2} \\
        {\small {}\textsuperscript{1}Swiss Data Science Center, Ecole polytechnique f\'ed\'erale  de Lausanne, 1015 Lausanne, Switzerland}\\
        {\small {}\textsuperscript{2}Department of Statistics, University of Washington, Seattle, WA 98195, USA}\\
}

\date{}

\maketitle

\begin{abstract}
We present a discriminative clustering approach in which the feature representation can be learned from data and moreover leverage labeled data. Representation learning can give a similarity-based clustering method the ability to automatically adapt to an underlying, yet hidden, geometric structure of the data. The proposed approach augments the DIFFRAC method with a representation learning capability, using a gradient-based stochastic training algorithm and an optimal transport algorithm with entropic regularization to perform the cluster assignment step. The resulting method is evaluated on several real datasets when varying the ratio of labeled data to unlabeled data and thereby interpolating between the fully unsupervised regime and the fully supervised regime. The experimental results suggest that the proposed method can learn powerful feature representations even in the fully unsupervised regime and can leverage even small amounts of labeled data to improve the feature representations and to obtain better clusterings of complex datasets. 
\end{abstract}

\section{Introduction}
Similarity-based clustering methods have been successfully applied in a number of applications, from computational chemistry to spectral imaging~\citep{hennig2015handbook,megaman}. The data is assumed to live in a space equipped with a dot product (or, alternatively, a distance function), which relies on a linear or nonlinear representation mapping. A popular example is the Gaussian radial basis function kernel, used for instance for spectral clustering, which implicitly yields a nonlinear representation mapping in a Hilbert space parameterized by the kernel parameters~\citep{vonluxburg2007,vanengelen2020}. 

When the representation mapping is parameterized by a small number of parameters, the parameters can then be selected using a cross-validation procedure with labeled data~\citep{vonluxburg2007,hennig2015handbook}. Recent progress in representation learning~\citep{oliver2018}, which can be seen as a generalized form of metric learning, allows us to revisit this question by parameterizing the transformation using deep artificial neural networks~\citep{ardila2019,thickstun2018,li2018} and to leverage the potential of labeled data. 

Equipping a clustering method with representation learning, and framing an objective that allows one to incorporate labeled data, that is, cluster assignments known beforehand, poses several challenges. The iterative algorithms involved in $k$-means (alternating minimization) or spectral clustering (spectral factorization) are not easily compatible with gradient-based optimization algorithms commonly used to learn feature representations in classical supervised learning such as multi-class classification. The issue arises from the different natures of the corresponding objectives that would need to be reconciled in a common framework. 
 Moreover, the information gained from assignments known beforehand (from labeled data) must inform both the representation learning and the resulting clustering, which again requires a framework in order to be conducted in a principled manner. 

In this paper we propose a discriminative clustering approach equipped with representation learning. The proposed approach is applicable when there is only unlabeled data, or some unlabeled data and some labeled data, or only labeled data. A precise and unique objective function allows us to recover or approach classical unsupervised learning and supervised learning objectives. Indeed, the objective naturally reduces to that of a clustering problem when we have no training set labels and that of a classification problem when we have all of the training set labels. Moreover, due to its simplicity, this setup can be extended. For example, indirect constraints on the labels, such as requiring two unlabeled observations to have different labels, can be readily incorporated. Such constraints can be useful if, e.g., someone labeling the data knew two observations should have different labels but did not know the correct label for each observation.

After reviewing related work on unsupervised and semi-supervised learning in Section~\ref{sec:related_work}, we present the framework in Section~\ref{sec:mfam_learning}. We then focus on a specific objective in Section~\ref{sec:framework}, showing that our proposed objective is smoother than a straightforward alternative. We address how to optimize the objective function in Section~\ref{sec:opt}. Optimizing over the labels requires care, and for this we present a novel algorithm based on a convex relaxation of the problem. Finally, we demonstrate the proposed approach in Section~\ref{sec:experiments}, showing that our method, called XSDC, is competitive with existing methods that are less flexible in their usage.

\section{Related Work}
\label{sec:related_work}

In this work we propose an approach that (1) clusters data regardless of the ratio of labeled to unlabeled data; and (2) learns a feature representation using the data at hand. 
We first survey prior approaches to clustering that work with varying levels of supervision. We then describe recent approaches to learning feature representations when no labeled data is available and when some labeled data is available. 

\paragraph{Semi-supervised clustering.} Intuitively, there are two main ways of developing a clustering algorithm that can work with both labeled and unlabeled data. First, one could modify a supervised classification method so that it can incorporate unlabeled data. Such modifications come in different flavors, including adding a penalty to a supervised learning objective to encourage similar inputs to be close together in feature space \citep{belkin2006,bachman2014,kamnitsas2018,iscen2019}, adding a penalty to encourage high-confidence outputs \citep{grandvalet2004}, or rounding outputs to obtain pseudo-labels \citep{lee2013,berthelot2019}. Other approaches add a supervised loss to an unsupervised loss \citep{beyer2019}. 
Alternatively, one could modify a clustering algorithm in order to incorporate labeled data. Approaches of this kind include constrained clustering using a $k$-means formulation~\citep{basu2002semi,bilenko:etal:2004,yoder2017semi} and generalizations thereof \citep{xu2009,white2012}, and fractionally-supervised classification  based on a Gaussian mixture model \citep{vrbik2015}. See the survey of \citet{oliver2018} and the books of \citet{chapelle2010,bouveyron2019}, and \citet{vanengelen2020} for an overview of semi-supervised algorithms.

The approach we take is based on DIFFRAC \citep{bach2007}, which falls in the former class of methods. DIFFRAC is a discriminative clustering method, that is, an unsupervised clustering method built off a supervised classification method. In the case of DIFFRAC, the supervised classification method is regularized least squares. 
In order to avoid trivial solutions, cluster size constraints are enforced. Various extensions of DIFFRAC have also been considered in the literature \citep{joulin2012,flammarion2017}. The advantage of the objective introduced by \citet{bach2007} is that it allows one to easily incorporate additional information about the clustering problem. Namely, it paved the way to several popular 
weakly supervised learning techniques developed by \citet{bojanowski2014, bojanowski2015} and \citet{alayrac2016} for computer vision problems.

\paragraph{Representation learning.} A large number of representation learning methods exist. Here we survey representation learning methods that are unsupervised and work with only unlabeled data or that are semi-supervised and work with both labeled and unlabeled data.

Most unsupervised deep feature learning methods can be broadly classified into one of two categories: methods that optimize a surrogate loss, often based on known structure in the data; and methods that directly optimize a loss function of interest. Early examples of the former set of methods include auto-encoders, which attempt to reconstruct the input observations through a deep network \citep{lecun1987,goodfellow2016}. Other more recent examples attempt to approximate a kernel at each layer of a network~\citep{bo2011,mairal2014,daniely2017}. 
Most recently, many papers have been taking advantage of structure in the data. This includes training to distinguish between multiple views of images or patches and other images or patches \citep{wang2015,dosovitskiy2016,sermanet2018,bachman2019}, learning to predict the relative locations of patches in images \citep{doersch2015,noroozi2016}, and predicting color from grayscale images \citep{zhang2016}. It also includes learning to distinguish segments within time series or patches within images, or to predict future observations in time series \citep{hyvarinen2016,lowe2019}. 
A downside to these latter approaches is the focus on achieving state-of-the-art results on domain-specific tasks in computer vision and signal processing at the expense of the conciseness of the formulation.

The second category of unsupervised methods typically alternately optimizes the parameters of the network and the labels or cluster assignments of the observations. In this thread, several papers alternate between obtaining assignments or soft assignments and optimizing the parameters of a loss function aimed at creating well-separated clusters \citep{xie2016,yang2016,ghasedi2017,hausser2017}. 
In contrast, \citet{bojanowski2017} randomly generate outputs and then alternately optimize over the parameters of the model and the assignment of labels to outputs. The most direct approach may be that of \citet{caron2018}, who alternately cluster the data to obtain pseudo-labels and take steps to optimize the multinomial logistic loss on the observations with the given pseudo-labels. A drawback of these approaches is the design of an \textit{ad-hoc} objective not clearly related to objectives commonly used in unsupervised clustering or supervised classification, or the combined use of two different objectives, one for optimizing the network and one for clustering. 

The category of semi-supervised representation learning methods includes a number of the semi-supervised clustering methods discussed above.  \citet{lee2013,kamnitsas2018,berthelot2019,beyer2019}, and \citet{iscen2019} all propose methods for learning features in the presence of unlabeled data. This category also includes approaches that learn a feature representation in an unsupervised manner before fine-tuning with labeled data \citep[e.g.,][]{wu2018}.  A downside to these approaches is that they either do not use a single objective function or they are not designed to work in the purely unsupervised setting. In this paper we build our formulation on an objective that encompasses the three settings of learning with unlabeled data only, learning with labeled and unlabeled data, and learning with labeled data only.

\paragraph{Relation to existing methods.} 
This work may be viewed as an extension of DIFFRAC \citep{bach2007} in which the feature representation is also learned from data. As argued by~\citet{daniely2017}, a feature representation defined by a deep network can sometimes be related to an approximation of a feature map associated with a composition of reproducing kernels. From this viewpoint, the approach in this paper can also be interpreted as learning a reproducing kernel, i.e., a similarity measure, acting on pairs of examples. Learning a similarity measure for the purpose of clustering was first explored by~\citet{meila2005} and \citet{bach2006}, whose focus was on learning kernel parameters from labeled data. Our approach can be seen as more general in that any differentiable feature representation defined as a chained composition of parameterized mappings can be learned from data for the purpose of clustering using the optimization and labeling  algorithms we propose. \citet{law2017} proposed a deep learning approach, but it was also purely supervised. 

In addition to using deep networks, we improve upon the work of~\citet{bach2007} by proposing a simplified convex relaxation of the labeling subproblem. This relaxation allows us to handle several types of constraints on the labels. The corresponding subproblem is similar to the problem~\citet{zass2006} solved to find a doubly stochastic matrix for use in spectral clustering. The problem we consider includes an additional regularization term that makes the problem strictly convex and enforces non-negativity of the minimizer. The labeling procedure we propose recovers the Sinkhorn-Knopp algorithm \citep{sinkhorn1967,peyre2019} when there is no labeled data and the sizes of the clusters are assumed to be known. 

After the first version of this work was completed,~\citet{asano2020} developed a similar approach aimed at representation learning for unsupervised cluster analysis, with a focus on computer vision problems such as image classification and object detection. In contrast to our approach, their method is based on a batch optimization algorithm. Moreover, their formulation is parameterized with respect to the label (assignment) matrix rather than the equivalence matrix.

\begin{figure*}[h!]
\begin{subfigure}{.33\linewidth}
\centering
\begin{tikzpicture}
\tikzset{square matrix/.style={
    matrix of nodes,
    column sep=-\pgflinewidth, row sep=-\pgflinewidth,
    nodes={draw,
      text height=#1/2+0.75ex,
      text depth=#1/2-0.75ex,
      text width=#1,
      align=center,
      inner sep=0pt
    },
  },
  square matrix/.default=0.6cm
}
\matrix(m)[square matrix]{
1 & 0 \\
1 & 0 \\
0 & 1 \\
1 & 0 \\
0 & 1 \\
};
\normalsize
  \foreach \x[count=\i from 0] in {1,...,5}\node[left] at (m-\x-1.west) {$\i$};
  \foreach \y[count=\j from 0] in {1,2}\node[above] at (m-1-\y.north) {$\j$};
\node[above, yshift=2.0cm] at (m) {Cluster index};
\node[left, xshift=-1.35cm,yshift=1.5cm,rotate=90] at (m) {Observation index};
\end{tikzpicture}
\caption{\label{fig:assign_mat}Assignment matrix \\ \centering $Y$}
\end{subfigure}%
\begin{subfigure}{.33\linewidth}
\centering
\begin{tikzpicture}
\tikzset{square matrix/.style={
    matrix of nodes,
    column sep=-\pgflinewidth, row sep=-\pgflinewidth,
    nodes={draw,
      text height=#1/2+0.75ex,
      text depth=#1/2-0.75ex,
      text width=#1,
      align=center,
      inner sep=0pt
    },
  },
  square matrix/.default=0.6cm
}
\matrix(m)[square matrix]{
1 & 1 & 0 & 1 & 0\\
1 & 1 & 0 & 1 & 0\\
0 & 0 & 1 & 0 & 1\\
1 & 1 & 0 & 1 & 0\\
0 & 0 & 1 & 0 & 1\\
};
\normalsize
  \foreach \x[count=\i from 0] in {1,...,5}\node[left] at (m-\x-1.west) {$\i$};
  \foreach \y[count=\j from 0] in {1,...,5}\node[above] at (m-1-\y.north) {$\j$};
\node[above, yshift=2.0cm] at (m) {Observation index};
\node[left, xshift=-2.25cm,yshift=1.5cm,rotate=90] at (m) {Observation index};
\end{tikzpicture}%
\caption{\label{fig:equiv_mat} Equivalence matrix \\ \centering $M=YY^T$}
\end{subfigure}
\begin{subfigure}{.33\linewidth}
\centering
\begin{tikzpicture}
\scriptsize
\tikzset{square matrix/.style={
    matrix of nodes,
    column sep=-\pgflinewidth, row sep=-\pgflinewidth,
    nodes={draw,
      text height=#1/2+0.75ex,
      text depth=#1/2-0.75ex,
      text width=#1,
      align=center,
      inner sep=0pt
    },
  },
  square matrix/.default=0.6cm
}
\matrix(m)[square matrix]{
1/3 & 1/3 & 0 & 1/3 & 0\\
1/3 & 1/3 & 0 & 1/3 & 0\\
0 & 0 & 1/2 & 0 & 1/2\\
1/3 & 1/3 & 0 & 1/3 & 0\\
0 & 0 & 1/2 & 0 & 1/2\\
};
\normalsize
  \foreach \x[count=\i from 0] in {1,...,5}\node[left] at (m-\x-1.west) {$\i$};
  \foreach \y[count=\j from 0] in {1,...,5}\node[above] at (m-1-\y.north) {$\j$};
\node[above, yshift=2.0cm] at (m) {Observation index};
\node[left, xshift=-2.25cm,yshift=1.5cm,rotate=90] at (m) {Observation index};
\end{tikzpicture}%
\caption{\label{fig:norm_equiv_mat}Normalized equivalence matrix \\ \centering $\tilde M=YY^T(YY^T)^\pinv$}
\end{subfigure}
\caption{\label{fig:clustering_matrices} Three different ways of representing a clustering.}
\end{figure*}

\section{Learning with any Level of Supervision}
\label{sec:mfam_learning}
We first describe a framework allowing us to circumscribe a family of methods whose objective can be conveniently reformulated in terms of the equivalence matrix. 
We show how, in this framework, one can easily incorporate information from labeled data if labeled data is available. We shall build off our approach to develop a generalization of the DIFFRAC method by equipping it with a representation learning capability. The proposed approach, as well as the companion algorithm, shall be referred to as \emph{XSDC}. The acronym stands for ``X-Supervised Discriminative Clustering'', where ``X'' can be ``un'', ``semi'' or ``-'', highlighting that all regimes of supervision 
(as one varies the ratio of labeled  to unlabeled data) are covered by the approach.

\subsection{Clustering methods based on equivalence matrices}
Consider observations $x_1, \dots, x_n \in\mbr^d$, each belonging to one of $k$ (unknown) clusters. 
There are several ways to represent the assignments of the observations $x_1,\dots, x_n$ to clusters \citep[see, e.g., ][]{zha2001,bach2006}. First, one can use the assignment matrix $Y\in\{0,1\}^{n\times k}$, where $Y_{i, \cdot}\eqqcolon y_i$ is a one-hot cluster assignment vector for observation $i$. Alternatively, one can use the equivalence matrix $M=YY^T$. In this matrix, entry $(i,j)$ is 1 if observations $i$ and $j$ belong to the same cluster and is 0 otherwise. Finally, one can use the normalized equivalence matrix $\tilde M=YY^T(YY^T)^\pinv$, where $M^\pinv$ denotes the pseudo-inverse of the matrix $M$. In this matrix entry $(i,j)$ is $1/n_{i}$ if observations $i$ and $j$ belong to the same cluster and is 0 otherwise, where $n_{i}$ is the number of elements in the cluster observations $i$ and $j$ belong to.  An example of each of these representations is depicted in Figure~\ref{fig:clustering_matrices}.

Now let $S\in\mbr^{n\times n}$ be a given similarity matrix derived from the observation matrix $X=[x_1,\dots, x_n]^T$, where $S_{ij}$ is the similarity between observations $i$ and $j$. Consider the problem of assigning observations to clusters. Intuitively, we want to maximize the similarity of points within each cluster. 
In other words, denoting $\langle A, B\rangle = \Tr(A^\top B)$, we might consider solving $\max_{\tilde M} \langle S, \tilde M \rangle$ or $\max_M \langle S, M \rangle$ subject to the constraint that each observation lies in exactly one cluster and $\tilde M$ or $M$ is a (normalized) equivalence matrix. 
In each case trivial solutions can exist (e.g., in the second case, if $S$ is strictly positive, then a trivial solution assigns all observations to the same cluster). 
To avoid such solutions we can add constraints on the cluster sizes. As we will see shortly, for particular choices of $S$, these two problems can lead to previously-established clustering algorithms, such as $k$-means and DIFFRAC \citep{macqueen1967,bach2007}. This intuition motivates the following family of clustering problems that we study in this section. 
To align with traditional clustering objectives we write the problem in terms of minimization rather than maximization.

\begin{table*}
\caption{\label{tab:m_family_methods} Examples of clustering methods belonging to the family of clustering methods from Definition~\ref{def:eq_fam_methods}. If not specified in the text, the notations used are the same as those in the references.}
\begin{center}
\small
\begin{tabular}{c|cccccc}
Algorithm & $\Psi_\theta(X)$ & $\Gamma_\theta(X)$  & $\alpha$ & $\beta$ & $\lagr_1$ & $\lagr_2$ \\
\toprule
Correlation clustering \citep{swamy2004} & $(w_{\text{out}}-w_{\text{in}})^T$ & $\Id_n$ & 1 & 0 & 0 & 0 \\
DIFFRAC \citep{bach2007} & 
$A_\lambda(X)$ & $\Id_n$ & 1  & 0 & 1 & 1\\
DIFFRAC-cosegmentation \citep{joulin2010} & $A_\lambda(X) + \mu/n L(X)$ & $\Id_n$ & 1 & 0 & 1 & 1 \\
$k$-means \citep{macqueen1967} & $-XX^T$ & $\Id_n$ & 1 & 1 & 0 & 0\\
Kernel $k$-means \citep{scholkopf1998}  & $-K$& $\Id_n$ & 1 & 1 & 0 & 0\\
Spectral clustering (Balanced cut) \citep{wu1993}& $L$ & $\Id_n$ & 1  & 0 & 1 & 1\\
Spectral clustering (NCut) \citep{shi2000} & $D^{-1/2}LD^{-1/2}$ & $D^{1/2}$ & 1  & 1 & 1 & 0\\
Spectral clustering (Ratio Cut) \citep{hagen1992} & $L$ & $\Id_n$ & 1  & 1 & 1 & 0\\
Stochastic block model$^a$ \citep{jalali2016} & $-A\log\frac{p_{\tau}(1-q)}{1-p_{\tau}q} - \Id_n\log\frac{1-p_{\tau}}{1-q}$& $\Id_n$ & 1 & 0 & 0 & 0 \\
\bottomrule
\end{tabular}
\end{center}
\footnotesize
$^a$The stochastic block model formulation assumes that the $p_i$'s and $q$ are known and $p_{\tau(i,j)}=p_\tau$ for all $i,j$, i.e., it is a homogeneous stochastic block model.
\end{table*}

\begin{definition}
\label{def:eq_fam_methods}
Let $\Psi_\theta:\mbr^{n\times d}\to\mbr^{n\times n}$ and $\Gamma_\theta:\mbr^{n\times d}\to\mbr^{n\times n}$ be functions parameterized by $\theta\in\mbr^{d_\theta}$ for some $d_\theta$. 
Defining $\Xi_\theta(X, Y) = \Gamma_\theta(X)YY^T\Gamma_\theta(X)$, a family of clustering problems is given by
\begin{align}
\min_{Y}\quad &\left\langle \Psi_\theta(X), \Xi_\theta(X, Y)^\alpha{\Xi_\theta(X, Y)^\pinv}^\beta\right\rangle \label{eq:m_family_obj}\\
\text{subject to} \quad &Y\mbone_k=\mbone_n, \ \lagr_1\left(Y^T\mbone_n-n_{\min}\mbone_k\right) \geq 0, \ \lagr_2\left(Y^T\mbone_n-n_{\max}\mbone_k\right) \leq 0, \ y_{ij}\in\{0,1\}\quad \forall \; i,j\nonumber\;,
\end{align}
where $\alpha,\beta,\lagr_1,\lagr_2\in\{0,1\}$ and $n_{\min}, n_{\max}> 0$ are the minimum and maximum allowable cluster sizes.
\end{definition}
Note that the objective can be written exclusively in terms of the equivalence matrix $M=YY^T$. 
Since we are minimizing rather than maximizing the objective, we can think of $\Psi_\theta(X)$ as a dissimilarity matrix on the observations parametrized by $\theta$. Usually $\Gamma_\theta(X)$ is the identity. 
However, in normalized cut spectral clustering 
where it is the degree matrix 
we can think of $\Gamma_\theta(X)$ as reweighting the entries of $M$ according to how important each observation is. Next we show how we can recover $k$-means, DIFFRAC, and normalized cut spectral clustering, given particular choices of $\Psi_\theta, \Gamma_\theta, \alpha,\beta,\lagr_1,$ and $\lagr_2$. Table~\ref{tab:m_family_methods} summarizes how these methods and some other common clustering algorithms fit into this family.

\begin{example}[$k$-means]
Define\: $\Psi_\theta(X)=-XX^T$ and $\Gamma_\theta(X)=\Id_n$, and let $\alpha=\beta=1$ and $\lagr_1=\lagr_2=0$. The resultant problem in the family from Definition~\ref{def:eq_fam_methods} is given by
\begin{align*}
\min_{Y}\quad &\left\langle -XX^T, YY^T{(YY^T)^\pinv}\right\rangle \\
\text{subject to} \quad &Y\mbone_k=\mbone_n, \  y_{ij}\in\{0,1\}\quad \forall \; i,j\;.
\end{align*}
Adding a term $\langle X, X\rangle$ to the objective, which does not affect the minimizer, and using the fact that for a matrix $Z$, $ZZ^\pinv=ZZ^T(ZZ^T)^\pinv$ \citep[p. 35]{lutkepohl1996}, we obtain
\begin{align*}
\left\langle XX^T, \id_n-YY^T{(YY^T)^+}\right\rangle &= \left\langle XX^T, \id_n-YY^\pinv\right\rangle = \|X-YY^\pinv X\|_F^2 = \min_{\mu\in\mbr^{k\times d}} \|X-Y\mu\|_F^2\;.
\end{align*}
Note that each row $\ell$ of the minimizer $\mu^\star$ contains the mean of the rows $X_{i, \cdot}$ of $X$ belonging to cluster $\ell$, i.e., the mean of the $X_{i, \cdot}$'s where $Y_{i,\ell} = 1$. The overall problem can then be written as
\begin{align*}
\min_{Y, \mu}\quad &\|X-Y\mu\|_F^2\\
\text{subject to} \quad& Y\mbone_k = \mbone_n, \ y_{ij}\in\{0,1\}\quad \forall \; i,j\;,
\end{align*} 
which is precisely the $k$-means problem.
\end{example}

\begin{example}[DIFFRAC with cluster size constraints]
Define $\Psi_\theta(X)=A_\lambda(X)$, where $A_\lambda(X)$ is given by $A_\lambda(X) \coloneqq \lambda\Pi_n \left(\Pi_n X X^T \Pi_n+ n\lambda\id\right)^{-1} \Pi_n$ and $\Pi_n$ is a centering matrix, $\Pi_n = \id_n - \ones_n\ones_n^T/n$. Furthermore, define $\Gamma_\theta(X)=\Id_n$. Let $\alpha=1, \beta=0$, and $\lagr_1=\lagr_2=1$. These choices of the parameters in the family from Definition~\ref{def:eq_fam_methods} lead to the problem

\begin{samepage}
\begin{align*}
\min_{Y}\quad &\langle A_\lambda(X), YY^T\rangle \\
\text{subject to} \quad& Y\mbone_k = \mbone_n, \  Y^T\mbone_n \geq \n_{\min}\mbone_k\, \  Y^T\mbone_n \leq \n_{\max}\mbone_k, \ y_{ij}\in\{0,1\}\quad \forall \; i,j\;.
\end{align*}
\end{samepage}%
This is precisely the DIFFRAC problem of \citet[equation 2]{bach2007} with cluster size constraints. \citet{bach2007} showed that this problem is the same as the following ridge regression problem, 
when also optimizing over $Y$:
\begin{align}
\label{eq:orig_diffrac}
\min_{\substack{Y\in\mcc^D_Y,\\ W\in\mbr^{d\times k},b\in\mbr^k}}\frac{1}{n}\sum_{i=1}^n \|y_i-(W^Tx_i + b)\|_2^2 + \lambda\|W\|_F^2\;,
\end{align}
where
$\mcc^D_Y = \{Y \in \{0,1\}^{n\times k}:Y\mbone_k=\mbone_n,  \n_{\min}\mbone_k \leq Y^T\mbone_n \leq \n_{\max}\mbone_k\}$ is the constraint set on the labels. 
\end{example}

\begin{example}[Normalized cut spectral clustering]
Let  $S_\theta\in\mbr^{n\times n}$ be a non-negative, symmetric similarity matrix derived from $X$ (e.g., $(S_\theta)_{ij} = \exp(-\|x_i-x_j\|^2/(2\theta^2))$). Given such a matrix $S_\theta$, which we will henceforth denote by $S$, define the degree matrix $D=\diag([D_i]_{i=1}^n)$ where $D_i = \sum_{j=1}^n S_{ij}$ for all $i$ and the Laplacian matrix $L=D-S$. Assume the degree $D_{ii}$ for each observation $i$ is strictly positive such that the degree matrix $D$ is invertible. Define $\Psi_\theta(X)=D^{-1/2}LD^{-1/2}$,  $\Gamma_\theta(X)=D^{1/2}$, and $\Xi_\theta(X,Y)=D^{1/2}YY^TD^{1/2}$, and let $\alpha=\beta=1$ and $\lagr_1=\lagr_2=0$. These choices lead to the problem from Definition~\ref{def:eq_fam_methods} given by

\begin{samepage}
\begin{align*}
\min_{Y}\quad &\left\langle D^{-1/2}LD^{-1/2}, \Xi_\theta(X,Y){\Xi_\theta(X,Y)^\pinv}\right\rangle \\
\text{subject to} \quad &Y\mbone_k=\mbone_n. \ Y^T\mbone_n \geq \mbone_k, \  y_{ij}\in\{0,1\}\quad \forall \; i,j\;.
\end{align*}
\end{samepage}%
The additional constraint $Y^T\mbone_n \geq \mbone_k$ ensures that there is at least one point per cluster. Without this constraint, the solution to the problem would assign all points to the same cluster. 
Using the fact that for a matrix $Z$, $ZZ^\pinv=ZZ^T(ZZ^T)^\pinv$ \citep[p. 35]{lutkepohl1996}, we can rewrite the objective as 
\begin{align*}
\tr\left[D^{-1/2}LD^{-1/2}\Xi_\theta(X,Y){\Xi_\theta(X,Y)^\pinv}\right] 
=\;& \tr\left((D^{1/2}Y)^{\pinv}D^{-1/2}LY\right) \\
=\;& \tr\left((Y^TDY)^{-1}Y^TLY\right) 
=\; \sum_{j=1}^k \frac{Y_{\cdot, j}^TLY_{\cdot, j}}{Y_{\cdot, j}^TDY_{\cdot, j}}\;.
\end{align*}

Let $\mathscr{C}=\{C_1,\dots, C_k\}$ define a clustering, where each $C_j$ is a set containing the indices of the observations in cluster $j$. Moreover, denote the sum of the degrees of nodes in a set $C$ by $\text{Vol}(C)$, the volume of $C$.  Spectral clustering traditionally makes use of the concept of a ``cut'' between two sets $C$ and $C'$, defined to be the sum of the similarities between elements in set $C$ and in set $C'$ \citep{vonluxburg2007,meila2016}:
 \begin{align*}
\text{Cut}(C,C') \coloneqq \sum_{i\in C}\sum_{j\in C'} S_{ij}\;.
\end{align*}
With this definition, we may rewrite the objective as
\begin{align*}
\sum_{j=1}^k \frac{Y_{\cdot, j}^TLY_{\cdot, j}}{Y_{\cdot, j}^TDY_{\cdot, j}} &= \sum_{j=1}^k\sum_{j\neq j'} \frac{\text{Cut}(C_j, C_{j'})}{\text{Vol}(C_j)}\;,
\end{align*}
where the last line follows from observing that $\text{Vol}(C_j)=Y_{\cdot, j}^TDY_{\cdot, j}$ and $\sum_{j'\neq j}\text{Cut}(C_j,C_{j'}) = Y_{\cdot, j}^T(D-S)Y_{\cdot, j}$ \citep{xing2003}. Therefore, the problem may be written as
\begin{align*}
\min_{\mathscr{C}=\{C_1,\dots, C_k\}} \quad &\sum_{j=1}^k\sum_{j'\neq j} \frac{\text{Cut}(C_j, C_{j'})}{\text{Vol}(C_j)}\\
\text{subject to} \quad& C_j\cap C_{j'} = \emptyset \ \forall\; j\neq j', \ \cup_{j=1}^k C_j = \{1,\dots,n\}\;,
\end{align*}
which is precisely the multi-way normalized cut clustering problem.
\end{example}

\subsection{Incorporating labeled data}
A natural way to take labeled data into account in the objective from Definition~\ref{def:eq_fam_methods} is to add additional constraints on the labels. Specifically, we now consider the case where the cluster label for each observation $i$, $y_i^\star$, may or may not be observed. Here the cluster label is represented using a dummy variable or one-hot encoding. We denote by $\mcs$  the set of indices corresponding to the labeled data and by $\mcu$ the set of indices corresponding to the unlabeled data. In this case, the constraint set on the label matrix $Y$ becomes $\mcc_Y = \{Y \in \{0,1\}^{n\times k}:Y\mbone_k=\mbone_n, \lagr_1\left(Y^T\mbone_n-n_{\min}\mbone_k\right) \geq 0, \lagr_2\left(Y^T\mbone_n-n_{\max}\mbone_k\right) \leq 0, y_i=y_i^\star \: \mbox{ for } \:  i \in \mcs\}$. We can translate this constraint set to the following constraint set on the equivalence matrix $M$: $\mcc_M = \{M\in\{0,1\}^{n\times n}: \exists Y\in\mcc_Y \text{ s.t. } M=YY^T\}$. In addition to optimizing over the entries of $M$, we can also consider optimizing over the parameters $\theta$ from Definition~\ref{def:eq_fam_methods}.

\begin{figure*}[t!]
 \vspace{-0.32cm}
\centering
\includegraphics[trim={0.25cm 0cm 0.3cm 0cm},clip,width=0.8\linewidth]{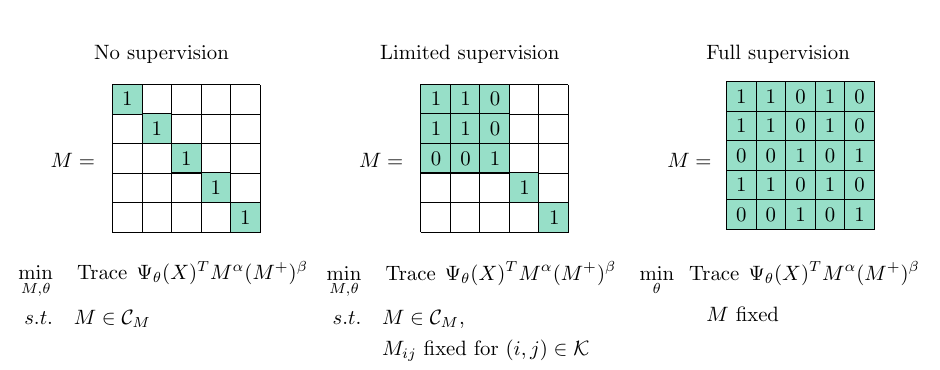}
\caption{\label{fig:supervisions} Example equivalence matrix $M$ and objective function for varying levels of supervision. For simplicity we set $\Gamma_\theta(X)=\Id_n$ in the objective functions.}
\end{figure*}

The advantage of this problem formulation is that it captures three regimes: the unsupervised regime, in which clustering is performed to learn the equivalence matrix $M$ in addition to the parameters $\theta$; the supervised regime, in which supervised training is performed to learn the parameters $\theta$; and the semi-supervised regime, in which a combination of clustering and supervised training are performed to learn the unknown elements of $M$, in addition to the parameters $\theta$. Given a specific clustering objective and an optimization algorithm, we may therefore proceed with training regardless of the amount of labeled data. Figure~\ref{fig:supervisions} displays examples of the equivalence matrix $M$ and the problem in the cases of no labeled data, some labeled data, and fully-labeled data.

\section{Extension of the DIFFRAC Objective}
\label{sec:framework}
\newcommand{\batchsize}{n_b}
In the remainder of this paper we focus on the extension of the DIFFRAC objective to equip it with the capability to learn a feature representation from unlabeled data and any amount of additional labeled data.

\subsection{Problem formulation}
\label{sec:deep_cluster}
In order to learn a feature representation, we propose transforming the input to the original DIFFRAC objective \eqref{eq:orig_diffrac} via the mapping defined by a deep network $\phi_V:\mbr^d\to\mbr^D$. The deep network is assumed here to be implemented within a differentiable programming framework. That is, the deep network is assumed to be amenable to automatic differentiation with respect to any (subset) of its parameters. We aim to use both the labeled and unlabeled data to learn (a) the parameters $V_\ell$ at each layer $\ell=1,2\dots, m$ of $\phi_V$, where $V = \{V_1,\dots, V_m\}$; and (b) the parameters $W\in\mbr^{D\times k}$ and $b\in\mbr^{k}$ of the classifier on the output features $\phi_V(x_i)$, $i=1,\dots, n$. Note that since the features are of dimension $D$, the dimension of $W$ has changed from $W\in\mbr^{d\times k}$ in Equation~\eqref{eq:orig_diffrac} to $W\in\mbr^{D\times k}$. For simplicity we will assume there exists a constant $B$ such that for all $V$ and for all $x\in\mbr^d$, $\|\phi_V(x)\|_2\leq B$, i.e., the network has bounded outputs.

To this end, we consider solving the problem
\begin{align}
\label{eq:training_obj_2}
\min_{\substack{Y\in\mcc_Y,\\ V,W,b}}\frac{1}{n}\sum_{i=1}^n \|y_i-(W^T\phi_V(x_i) + b)\|_2^2 + \mcr(V, W)\;,
\end{align}
where
$\mcc_Y = \{Y \in \{0,1\}^{n\times k}:Y\mbone_k=\mbone_n,\n_{\min}\mbone_k \leq Y^T\mbone_n \leq \n_{\max}\mbone_k,  y_i=y_i^\star \: \mbox{ for } \:  i \in \mcs\}$ is the constraint set on the labels  and
$\mcr(V, W) \coloneqq \reg\sum_{j=1}^m \|V_j\|_F^2  + \regtwo\|W\|_F^2$ contains the regularization terms.
Here $\reg\geq 0$ and $\regtwo\geq0$ are regularization parameters. We add a regularization penalty on the network parameters $V_j$ to promote smoothness of the learned network. In the following we denote simply $\phi_i(V) = \phi_V(x_i)$ and $\Phi(V) = (\phi_1(V),  \ldots, \phi_n(V))^T$.

 We shall in Section~\ref{sec:opt} present an algorithm to optimize this objective.

\paragraph{Comparison to reverse prediction objective.}
Following the terminology of \citet{xu2009}, the main component of our objective is regularized ``forward prediction'' least squares. 
By this phrase, we mean the linear prediction of $Y$ from $\Phi(V)$ incurring the objective value $\|Y-\Phi(V)W -\ones_n b^T\|_F^2/n+\regtwo\|W\|^2_F$. 
This is in contrast to the alternative ``reverse prediction'' least squares, where we would work with $\|\Phi(V)-YW\|_F^2/n$ instead. 
This arises for instance if we use a $k$-means-type objective. 
In both cases we can alternate between updating the parameters $V$ and $W$ and estimating the labels $Y$. 

One way in which we can compare the quality of the objectives generated by these two options for learning a representation is via their smoothness properties, i.e., their Lipschitz-continuity and the Lipschitz-continuity of their gradients. These control the step sizes of optimization methods; see \citet{bertsekas2016} and \citet{nesterov2018} for a discussion of the interplay between smoothness properties and rates of convergence. 
We now proceed to show that when fixing the labels $Y$ the forward prediction objective is smoother than the reverse prediction objective for appropriate choices of the regularization parameter $\regtwo$.

For both objectives, we consider fixed labels $Y \in \{0,1\}^{n\times k}$ with $Y \ones_k = \ones_n$. Moreover, for simplicity we will take $\reg=0$.
Consider the ``forward prediction'' objective from~\eqref{eq:training_obj_2}. Define the centering matrix $\Pi_n = \id_n - \ones_n\ones_n^T/n$. 
After minimizing over the intercept $b$, the problem may be written as
\begin{align}
	\min_V F_f(\Phi(V))\coloneqq 
&\min_{V,W} \frac{1}{n}\|\Pi_n[Y-\Phi(V)W]\|^2_F + \regtwo\|W\|^2_F 
	= \min_V \tr[Y Y^TA_\regtwo(\Phi(V))]\;, \label{eq:diffrac}
\end{align}
where $A_\regtwo(\Phi) =\regtwo \Pi_n \left(\Pi_n\Phi \Phi^T \Pi_n+ n\regtwo\id_n\right)^{-1} \Pi_n$. 

The corresponding ``reverse prediction'' problem is given by
\begin{align*}
\min_V F_r(\Phi(V)) &\coloneqq \min_{V,W} \frac{1}{n}\|\Phi(V)-YW\|^2_F =\min_V \frac{1}{n}\tr[(\id-P_Y)\Phi(V)\Phi(V)^T]\;,
\end{align*}
where $P_Y=Y(Y^TY)^{-1}Y^T$ is an orthonormal projector. 

To compare the smoothness with respect to any matrix $V_j$, $j=1,\dots, m$, it suffices to compute the smoothness with respect to $\Phi$. The next two propositions do that and suggest that the ``forward prediction'' objective is actually smoother than the ``reverse prediction'' objective. The proofs can be found in Appendix~\ref{app:conditioning}.

\begin{restatable}{proposition}{smoothnessobj}
\label{prop:smoothness_obj}
Let $\mathcal{Z}$ be the set of all possible feature matrices $\Phi\in\mbr^{n\times D}$. Assume there exists $B\in\mbr$ such that for all $\Phi\in\mathcal{Z}$, $\|\Phi\|_2\leq B$. 
Let $\rho_{\max}$  be a bound on the maximal fraction of points in a cluster, i.e., $\rho= n_{\max}/n$.
Then the Lipschitz constants of $F_f$ and $F_r$ with respect to the spectral norm can be estimated by
$L_f \coloneqq 2B \rho_{\max}/\left({n\regtwo}\right)$
and 
$L_r \coloneqq {2B}/{n}$
respectively. 
Hence, whenever  $\regtwo \geq \rho_{\max}$, we have  $L_f\leq L_r$.
\end{restatable}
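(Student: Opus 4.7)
The plan is to estimate each Lipschitz constant by uniformly bounding the spectral norm of the gradient $\nabla_\Phi F_\bullet(\Phi)$ over the admissible set $\mathcal{Z}$; by the usual mean-value argument this yields a global Lipschitz constant, and the comparison then reduces to a one-line algebraic inequality. The main obstacle is differentiating $F_f$ through the matrix inverse $M(\Phi)^{-1}$.

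For $F_r$ the gradient is immediate: since $F_r$ is quadratic in $\Phi$ with $\id_n - P_Y$ symmetric,
\[
\nabla_\Phi F_r(\Phi) = \tfrac{2}{n}(\id_n - P_Y)\Phi.
\]
The matrix $P_Y = Y(Y^TY)^{-1}Y^T$ is an orthogonal projector, so $\|\id_n - P_Y\|_2 \le 1$, and combined with the hypothesis $\|\Phi\|_2\le B$ this gives $\|\nabla F_r(\Phi)\|_2 \le (2/n)B$, which is the stated $L_r$.

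For $F_f$, set $M(\Phi) = \Pi_n \Phi\Phi^T\Pi_n + n\lambda \id$ and $C = \Pi_n YY^T \Pi_n$ so that $F_f(\Phi) = \lambda\tr[CM(\Phi)^{-1}]$. Expanding via the first-order perturbation identity $\delta M^{-1} = -M^{-1}\,\delta M\,M^{-1}$ with the symmetric perturbation $\delta M = \Pi_n(\Delta\Phi^T + \Phi\Delta^T)\Pi_n$, and using cyclicity together with the symmetry of $K := \Pi_n M^{-1} C\, M^{-1}\Pi_n$ to collapse the two perturbation terms into a single $2\tr[\Phi^T K\Delta]$, one obtains
\[
\nabla_\Phi F_f(\Phi) = -2\lambda\,\Pi_n M(\Phi)^{-1} C\, M(\Phi)^{-1}\Pi_n\,\Phi.
\]
It remains to bound each factor in spectral norm. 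One has $\|\Pi_n\|_2 = 1$; since $M(\Phi) \succeq n\lambda\id$, $\|M(\Phi)^{-1}\|_2 \le (n\lambda)^{-1}$; the constraint $Y\in\{0,1\}^{n\times k}$ with $Y\ones_k=\ones_n$ forces $Y^TY$ to be diagonal with entries equal to the cluster sizes, so $\|C\|_2 \le \|Y\|_2^2 = \max_j n_j \le n_{\max}$ by the cluster-size constraint built into $\mathcal{C}'$; and $\|\Phi\|_2\le B$ by hypothesis. Multiplying these bounds yields
\[
\|\nabla F_f(\Phi)\|_2 \;\le\; 2\lambda \cdot (n\lambda)^{-2}\cdot n_{\max}\cdot B \;=\; \frac{2 n_{\max} B}{\lambda n^{2}} \;=\; L_f.
\]

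Finally, $L_f \le L_r$ is equivalent to $2n_{\max}B/(\lambda n^2) \le 2B/n$, i.e.\ $\lambda \ge n_{\max}/n$, which finishes the proof. The only step that is more than bookkeeping is the derivation of the gradient of $F_f$; once the formula is in hand, the estimate is a cascade of standard submultiplicative spectral-norm bounds together with the diagonal structure of $Y^TY$ under $\mathcal{C}'$.
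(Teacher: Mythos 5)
Your proposal is correct and follows essentially the same route as the paper: both derive the gradient $\nabla F_f(\Phi) = -2\lambda\,\Pi_n G(\Phi)\,\Pi_n YY^T\Pi_n\, G(\Phi)\,\Pi_n\Phi$ (you via the perturbation identity $\delta M^{-1}=-M^{-1}\delta M\,M^{-1}$, which the paper leaves implicit), then bound each factor by $\|\Pi_n\|_2\le 1$, $\|G\|_2\le (n\lambda)^{-1}$, $\|YY^T\|_2\le n_{\max}$, $\|\Phi\|_2\le B$, and treat $F_r$ via the projector bound $\|\id - P_Y\|_2\le 1$. The final comparison $\lambda\ge n_{\max}/n$ is the same one-line algebraic step as in the paper.
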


\begin{restatable}{proposition}{smoothness}
\label{prop:smoothness}
Under the same assumption as Proposition~\ref{prop:smoothness_obj}, the Lipschitz constants of $\nabla F_f$ and $\nabla F_r$ with respect to the spectral norm can be estimated by 
$\ell_f \coloneqq 2\rho_{\max}/(n\regtwo) + 8B^2\rho_{\max}/(n\regtwo)^2$
and 
$\ell_r \coloneqq {2}/{n}$
respectively.  
Hence,  whenever $\regtwo \geq (\rho_{\max} + \sqrt{\rho_{\max}^2{+}16B^2\rho_{\max}})/2$, we have  $\ell_f\leq\ell_r$.
\end{restatable}

\subsection{Optimization algorithm}
\label{sec:opt}
The XSDC algorithm involves two main components: an optimization algorithm that leverages the algebraic structure of~\eqref{eq:training_obj_2}, and a cluster assignment algorithm that boils down to matrix balancing. 

The algorithm proceeds by using mini-batches. Below, we shall see that, with the square loss, we only ever need to work with the equivalence matrix $M\coloneqq YY^T$ rather than the label matrix $Y$ itself during training. Therefore, at each iteration we first estimate $M$ for a mini-batch given fixed $V,W$, and $b$. Then we update $V, W$, and $b$ for fixed $M$. The difficult part is estimating $M$. 

\paragraph{Stochastic training of parameters.}
To optimize over $V, W$, and $b$ we apply the ultimate layer reversal stochastic gradient optimization (ULR-SGO) method of \citet{jones2020}. Applied to~\eqref{eq:training_obj_2}, the resulting algorithm builds an outer loop around the optimization of $V$ and relegates to inner loops the optimization of the other variables. This stands in contrast to a direct approach using plain stochastic gradient optimization which would optimize all variables jointly regardless of their respective difficulty to be optimized. 

Concretely, denote the objective function \eqref{eq:training_obj_2} for fixed $Y$ by $\obj_{\ulr}(V, W, b)$. At each iteration, the algorithm computes $\hat\obj_\ulr(V)\coloneqq \min_{W,b}  \obj_{\ulr}(V, W, b)$, rewriting the objective exclusively in terms of $V$. Then it updates $V$ by taking one gradient step on $\hat\obj_\ulr(V)$. As long as $\obj_{\ulr}$ is twice differentiable and $\obj_{\ulr}$ viewed as a function of $W$ and $b$ is strongly convex, gradient descent on this objective converges to a stationary point and the resultant $\varepsilon-$ stationary points are $\varepsilon-$ stationary points of the original problem. If $\hat\obj_{\ulr}(V)$ is not available in closed form we may estimate it using a quadratic approximation of the loss around the current estimate of $V$. In addition, this method can also be applied on mini-batches and in the setting where $V$ is constrained.

The proposed optimization scheme has two main benefits. 
First, the focus on the optimization of $V$ facilitates the tuning of the step size along the iterations, keeping the number of parameters of the algorithm to a minimum.
Second, in the case of the square loss it allows us to work with the equivalence matrix $M= YY^T$ rather than the assignment matrix $Y$ during the alternating optimization. To see this, 
observe that from Equation~\eqref{eq:diffrac} we have
\begin{align*}
\hat\obj_{\ulr}(V) = 
\tr [MA_\regtwo(\Phi(V))] + R(V)\;,
\end{align*}
where $A_\regtwo$ is defined as in Equation~\eqref{eq:diffrac} and the regularization term is
$R(V) \coloneqq \reg\sum_{j=1}^m \|V_j\|_F^2\;.$
Since we only need to optimize over $M$ we can avoid dealing with the problem of there being many solutions $Y^\star$ caused by the optimal objective value being the same if the columns of $Y$ are permuted. 

\paragraph{Matrix balancing.}
Next, consider the objective function \eqref{eq:training_obj_2} when fixing $V,W$, and $b$ and optimizing over only the equivalence matrix $M=YY^T$. As shown in Proposition~\ref{prop:balancing_npcomplete} in Appendix~\ref{app:balancing}, this problem is NP-complete in general. 
Therefore, we consider a convex relaxation of it. We use an entropic regularizer $h(M)=\sum_{i,j=1}^n M_{ij}\log(M_{ij})$, which makes the objective strongly convex and enforces positivity of $M$. This regularizer appears in a Bregman divergence term $D_h(M;M_0)= h(M) - h(M_0) - \langle \nabla h(M_0), M - M_0 \rangle$, which can be used to ensure the output does not stray too far from an initial guess $M_0$. Specifically, we consider the problem
\begin{align}
\label{eq:balancing_obj}
\min_{M} \quad  & \frac{1}{2}\tr(MA) + \entreg D_h(M; M_0)  \\*
\mbox{subject to} \quad  & M_{ij}=m_{ij} \quad \forall \; (i,j)\in \mathcal{K}, \  n_{\min}\ones_n \leq  M \ones_n \leq n_{\max}\ones_n, \  n_{\min}\ones_n \leq  M^T \ones_n \leq n_{\max}\ones_n \nonumber \;,
\end{align}
where $m_{ij}$ for $i,j\in\mathcal{K}\coloneqq(\mcs\times\mcs) \cup \{(1,1),\dots, (n,n)\}$ represent the known entries of $M$ and $\entreg>0$ is a hyperparameter. Define $\tilde Q = \entreg^{-1}A - \log(M_0)$, $n_{\Delta} = (n_{\max} - n_{\min})/2$, and $n_{\Sigma} = (n_{\max} + n_{\min})/2$. Furthermore, let $\Regtwo=[\Regtwo_{ij}]_{i,j=1}^n$ with $\Regtwo_{ij}=\regtwo_{ij}$ if $(i,j)\in \mathcal{K}$ and $\Regtwo_{ij}= 0$ otherwise, where the $\regtwo_{ij}$'s are dual variables. After scaling the problem by $\entreg^{-1}$, the dual of this problem is then equivalent to 
\begin{align*}
\min_{\substack{a\in\mbr^n, b\in\mbr^n, c\in\mbr^n, \\ d\in\mbr^n, \regtwo\in\mbr^{|\mck|}}} \quad &  \ \exp(-a)^T\exp(-( \tilde Q+\Regtwo))\exp(-c) + n_{\Delta}(b+ d)^T \ones + n_{\Sigma} (a+c)^T \ones + \sum_{(i,j)\in\mck} \regtwo_{ij}m_{ij}\nonumber\\
\mbox{subject to} \quad  & b \geq |a|, \quad d \geq |c|\;. \nonumber
\end{align*}
Minimization in $b$ and $d$ can be performed analytically. We optimize over the remainder of the variables via alternating minimization. Defining $u=\exp(-a)$, $v=\exp(-c)$, and $N=\exp(-(\tilde Q + \Regtwo))$, the steps of the alternating minimization at iteration $t$ are given by
\begin{alignat*}{2}
	N^{(t)}_{ij} &= m_{ij}/\left(u_i^{(t-1)}v_j^{(t-1)}\right), \hspace{40pt}  \forall (i,j)\in\mck, \hspace{24pt}
	N^{(t)}_{ij} = \exp(-\tilde Q_{ij}), \hspace{78pt}  \forall (i,j)\notin\mck \\
	u^{(t)}_i &= \Proj_{\mathcal{B}_{\infty}(n_{\Sigma}, n_\Delta)}\left(\frac{{N^{(t)}_{i, \cdot}}^Tv^{(t-1)}}{{N^{(t)}_{i, \cdot}}^Tv^{(t-1)}}\right), \hspace{3pt} \forall i\in\{1,\dots, n\}, \hspace{10pt}
	v^{(t)}_i = \Proj_{\mathcal{B}_{\infty}(n_{\Sigma}, n_\Delta)}\left(\frac{{N^{(t)}_{\cdot, i}}^Tu^{(t)}}{{N^{(t)}_{ \cdot, i}}^Tu^{(t)}}\right), \quad  \forall i\in\{1,\dots, n\}, 
\end{alignat*} 
where $\Proj_{\mathcal{B}_{\infty}(x, R)}(y)$ denotes the projection on the unit $\ell_\infty$ ball centered at $x$ with radius $R$. 
This leads to Algorithm~\ref{alg:constrained_balancing}. In practice we find that 10 steps of the alternating minimization suffice. Note that in the case where the cluster sizes are predetermined and no labeled data exists, this reduces to the Sinkhorn-Knopp algorithm \citep{sinkhorn1967}. In Appendix~\ref{app:alt_label} we discuss an alternative relaxation of the labeling subproblem that was proposed by \citet{bach2007}.

\begin{algorithm}[H]
	\begin{algorithmic}[1]
		\caption{\label{alg:constrained_balancing} Matrix Balancing}
		\State{{\bfseries Input:} Matrix $A \in \reals^{n\times n}$\\
		\hspace{1.0cm} Matrix $\hat M \in\{0,1,?\}^{n \times n}$ encoding known \\ 
		\hspace{1.4cm} relations $m_{i,j} \in \{0,1\}$ with $(i,j)\in\mck$ 
 }
		\State{{\bfseries Hyperparameters}:\\ Minimum and maximum cluster sizes $n_{\min}$, $n_{\max}$, number of iterations $T$, entropic regularization $\entreg$ }
		\State{{\bf Initialize:} 
		$\tilde Q = \entreg^{-1}A-\log(\ones_n\ones_n^T/k)$\\
		 \hspace{1.5cm} $n_{\Delta} = (n_{\max} - n_{\min})/2$\\
		 \hspace{1.5cm} $n_{\Sigma} = (n_{\max} + n_{\min})/2$\\
		 \hspace{1.5cm} $u = v = \ones_n$
		 }
		 \For{$t=1,\ldots,T$}
		\State{$	N_{ij} \gets m_{ij}/(u_iv_j) \;, \hspace{2.18cm} (i,j)\in\mck$}
		\State{$	N_{ij} \gets \exp(-\tilde Q_{ij})\;,  \hspace{2.21cm} (i,j)\notin\mck$}
		\State{$ p_{v,i}\gets \Proj_{\mathcal{B}_{\infty}(n_{\Sigma}, n_\Delta)}\left(N_{i, 
		\cdot}^T v\right)\;, \hspace{0.96cm} i=1,\dots, n$}
		\State{$u_i \hspace{0.2cm} \gets p_{v,i} /(N_{i, \cdot}^Tv)\;, \hspace{2.18cm} i=1,\dots, n$}
		\State{$ p_{u,i} \gets \Proj_{\mathcal{B}_{\infty}(n_{\Sigma}, n_\Delta)}\left(N_{\cdot, i}^T u\right)\;, \hspace{0.93cm} i=1,\dots, n$}
		\State{$v_i \hspace{0.25cm} \gets p_{u,i} /(N_{\cdot, i}^Tu)\;, \hspace{2.135cm} i=1,\dots, n$}
		\EndFor
		\State{{\bfseries Output:} $M= \diag(u) N \diag(v)$}
	\end{algorithmic}
\end{algorithm}

 \begin{algorithm}[H]
\caption{\label{alg:xsdc} XSDC (when some labeled data exists)}
	\begin{algorithmic}[1]
		\State{{\bf Input:} Labeled data $X_\mcs, Y_\mcs$\\
		\hspace{1.0cm} Unlabeled data $X_\mcu$\\
		\hspace{1.0cm} Randomly initialized network parameters $V^{(0)}$\\
		\hspace{1.0cm} Number of iterations $T$}
		\State{{\bf Initialize:} \newline $V^{(1)},W^{(1)},b^{(1)}{\gets}$ Optimize \eqref{eq:training_obj_2} over $V,W,b$ using $X_\mcs$ and $Y_\mcs$, starting from $V^{(0)}$}
		\For{$t = 1, \ldots, T$}
		\State{$X^{(t)}, Y^{(t)} \gets \text{Draw minibatch of samples}$ }
		\State{$M^{(t)} \gets \text{MatrixBalancing}(A_\regtwo^{(t)}, Y^{(t)}{Y^{(t)}}^T) $}
		\State{$V^{(t+1)} \gets \text{ULR-SGO step}(\Phi_{V^{(t)}}(X^{(t)}), M^{(t)}, V^{(t)})$}
		\EndFor
		\State{$\hat Y_\mcu \gets \text{NearestNeighbor}(\Phi_{V^{(T+1)}}(X),Y_\mcs)$ }
		\State{$\hat W, \hat b \gets \text{RegLeastSquares}(X, [Y_\mcs, \hat Y_\mcu]) $}
		\vspace{0.12cm}
		\State{{\bf Output: } $\hat Y_\mcu, V^{(T+1)}, \hat W, \hat b$}	
	\end{algorithmic}
\end{algorithm}
 \begin{algorithm}[H]
	\caption{\label{alg:xsdc_unsup} XSDC (when no labeled data exists)}
	\begin{algorithmic}[1]
		\State{{\bf Input:} %
		Unlabeled data $X_\mcu$ \\
		\hspace{1.0cm} Randomly initialized network parameters $V^{(1)}$\\
		\hspace{1.0cm} Number of iterations $T$}
		\For{$t = 1, \ldots, T$}
		\State{$X^{(t)} \gets \text{Draw minibatch of samples}$ }
		\State{$M^{(t)} \gets \text{MatrixBalancing}(A_\regtwo^{(t)}, \Id_{\batchsize}) $}
		\State{$V^{(t+1)} \gets \text{ULR-SGO step}(\Phi_{V^{(t)}}(X^{(t)}), M^{(t)}, V^{(t)})$}
		\EndFor
		\State{$M^{(T+1)} \gets \text{MatrixBalancing}(A_\regtwo^{(T+1)}(\Phi_{V^{(T+1)}}(X_{\mcu})), \Id_{n_\mcu}) $}
		\State{$\hat {Y}_\mcu \gets \text{SpectralClustering}(M^{(T+1)})$ }
		\State{$\hat W, \hat b \gets \text{RegLeastSquares}(X; \hat Y_\mcu) $}
		\State{{\bf Output: } $\hat Y_\mcu, V^{(T+1)}, \hat W, \hat b$}	
	\end{algorithmic}
\end{algorithm}

 \paragraph{XSDC algorithm.}
The overall XSDC algorithm for the case where some labeled data is present is summarized in Algorithm~\ref{alg:xsdc}. The algorithm proceeds as follows. First, we initialize the parameters $V$ randomly and then optimize the objective on the labeled data to obtain initial estimates of $V,W,$ and $b$. Next, we proceed to optimize using the labeled and unlabeled data together. At each iteration, we draw a mini-batch of $\batchsize$ inputs $X^{(t)}= (x^{(t)}_{1}, \ldots, x^{(t)}_{\batchsize})$ with corresponding labels $Y^{(t)}$ (some known, some unknown). We compute the network output $\Phi_{V^{(t)}}(X^{(t)}) = (\phi_{V^{(t)}}(x^{(t)}_{1}) ,\allowbreak\ldots, \phi_{V^{(t)}}(x^{(t)}_{\batchsize}) )^T $ , followed by $A^{(t)}_\regtwo(\Phi_{V^{(t)}}(X^{(t)}))=\\\regtwo\Pi_{\batchsize}(\Pi_{\batchsize}\Phi_{V^{(t)}}(X^{(t)})\Phi_{V^{(t)}}(X^{(t)})^T\Pi_{\batchsize} + \batchsize\regtwo \id_{\batchsize})^{-1}\Pi_{\batchsize}$.

We then perform matrix balancing to obtain $M^{(t)}$. Fixing $M^{(t)}$, we then take a gradient step based on the ULR-SGO objective. 
Once the feature representation has been optimized, we obtain labels $ \hat Y_\mcu$ for the unlabeled data using 1-nearest neighbor on the feature representations $\Phi_{V^{(T+1)}}(X)$.
 Finally, we estimate the parameters $W$ and $b$ by computing the solution to the least squares problem with $X$ and $[Y_\mcs, \hat Y_\mcu]$.

The algorithm in the special case in which there is no labeled data is summarized in Algorithm~\ref{alg:xsdc_unsup}. Aside from removing the supervised initialization step, the only difference lies in the estimation of $\hat Y_\mcu$ and the evaluation of the performance. Specifically, since we do not have any labeled data with which to perform nearest neighbor estimation, we instead use spectral clustering. Note that the cluster numbers output by spectral clustering do not necessarily map to the correct labels (e.g., cluster 0 might correspond to the label 1 rather than 0). Therefore, to evaluate the accuracy of the method we find the optimal relabeling of the classes that aligns with the true labels. We do so by solving a maximum weight matching problem with the Hungarian algorithm~\citep{schrijver2003}. In this special case, the algorithm allows one to equip the DIFFRAC method with a representation learning ability, extending the original work of~\cite{bach2007}

The XSDC algorithm has two benefits. First, learning the features does not require knowledge of the number of clusters. Instead, it requires only a bound on the fraction of points per cluster, for use in the matrix balancing. Specifying such a bound is easier than providing the number of clusters. The only time we must use knowledge of the number of clusters is when evaluating the performance of the learned features. Second, the algorithm is extendable to the case where we have additional must-link or must-not-link information related to the labels. For example, if we know observations $i$ and $j$ must not have the same label, we can encode that constraint in the above problem by adding $(i,j)$ to $\mck$ and setting $m_{ij}=0$. The algorithm itself is otherwise identical.  This is an important extension for cases where the sources of annotation (such as human annotators from crowdsourcing platforms) may have been unsure and failed to produce a label for an observation (e.g., ``Welsh springer spaniel'') but could provide certain relevant label information (e.g., the dog is not the same breed as the dog in another image).

\section{Experiments}
\label{sec:experiments}

In the experiments we illustrate how the proposed approach can be used to perform discriminative clustering while learning a feature representation and leveraging any amount of labeled data at hand. 
Exploring specialized versions of our algorithm for specific applications is beyond the scope of this paper. 
We focus on unifying learning with no labeled data, some labeled data, and fully-labeled data in a single training objective. 
Recall that the proposed algorithm is referred to as XSDC in the tables and in the figures. 
 \begin{table*}[t]
 \centering
  \caption{\label{tab:dataset_info} Details regarding the datasets used in the experiments.}
\begin{tabular}{c|rrrrr}
Dataset  & Training size  &  Validation size & Test size & Dimension & \# Classes\\
\toprule
CIFAR-10 &40,000 & 10,000  &10,000& 3,072 & 10\\
Gisette & 4,800 & 1,200 & 1,000 & 5,000 & 2\\
MAGIC & 8,026 & 2,006  & 3,344 & 10 & 2\\
MNIST & 50,000 & 10,000 &10,000 & 784 & 10\\
\bottomrule
\end{tabular}
\end{table*}

\subsection{Choice of $\phi_V$}
One benefit of the XSDC algorithm is that it can actually learn a similarity measure for similarity-based clustering. Typical clustering methods either do not transform the features or use a kernel-based method. However, clustering in the original space of raw features can be ineffective in many problems. Moreover, clustering using the Gram matrix on the inputs is computationally infeasible when there are a large number of observations and may fail when the kernel is improperly chosen~\citep{perez-cruz2004}. 

We use kernel networks to define the feature representation mapping $\phi_V$. Several methods for approximating kernels exist, including random Fourier features and the Nystr\"{o}m method \citep{rahimi2007,williams2000,mohri2012,daniely2016}. Random Fourier features are data-independent and the parameters of the Nystr\"{o}m method are typically selected at random or via a quantization procedure~\citep{oglic2017}. 

We instead learn the parameters of Nystr\"{o}m approximations of kernels at each layer, similarly to~\citet{mairal2016}.  Following~\citet{mairal2016}, the regularized Nystr\"{o}m approximation method approximates a kernel $k$ by computing the inner products of features $\phi(x)$ defined by $\phi(x)=(k(V^TV)+\epsilon I)^{-1/2}k(V^Tx)$ for some small $\epsilon>0$ where the matrix $V$ contains the parameters and $k$ is understood to be applied element-wise. 

We expect similar behavior for other kinds of networks, given observations made by \citet{lee2018,matthews2018}, and \citet{belkin2018}.

\subsection{Experimental details}
\label{sec:experiments_details}

\paragraph{Experimental setup.} The experiments focus on four datasets: the vectorial datasets Gisette~\citep{guyon2004} and MAGIC \citep{bock2004}, as well as the image datasets MNIST~\citep{lecun2001} and CIFAR-10~\citep{krizhevsky2009}. Gisette contains features derived from images of digits, and the goal is to distinguish between observations corresponding to the numbers four and nine. MAGIC contains measurements related to simulated particles observed by a gamma telescope. The aim is to distinguish between gamma particles and hadrons. MNIST contains images of the digits 0-9 and the objective is to be able to distinguish between all ten digits. Finally, CIFAR-10 contains images of ten different objects (e.g., birds, planes), and the goal is to classify the object in each image.

The details of the sizes and dimensions of each dataset we consider can be found in Table~\ref{tab:dataset_info}. 
For the MAGIC dataset, which does not have a train/test split, we randomly split the data 75\%/25\% into train/test sets.  
For Gisette, MAGIC, and CIFAR-10 we set aside 20\% of the training set to use as a validation set, while for MNIST we set aside the standard 17\%. In one experiment we vary the distribution of the labels in MNIST. 
For this experiment we use 25,000 unlabeled images, 50 of which are labeled. Each class with labels 0-4 has the same number of unlabeled observations (e.g., 3992 per class when labels 0-4 make up 80\% of the data), and same for classes 5-9 (998 per class when labels 5-9 make up 20\% of the data). However, the labeled data is still balanced.

The datasets are transformed prior to usage as follows. Gisette is the scaled version found in the LibSVM database~\citep{chang2011}. MAGIC and MNIST are standardized. For CIFAR-10, we use the gradient map. %
As some of our experiments use the version of XSDC that assumes the classes are balanced, we randomly remove from the MAGIC dataset 5,644 observations in the dataset with label 1.

The architectures we use in the experiments are kernel networks. For the vectorial datasets we use single-layer kernel networks (KNs) that approximate a Gaussian RBF kernel using the Nystr\"{o}m method. In contrast, for MNIST we use a convolutional kernel network (CKN) translation of LeNet-5~\citep{lecun2001} and for CIFAR-10 we use a CKN applied to the gradient map on the inputs (CKN-GM)~\citep{mairal2014}.
For each of these networks we use 32 filters per layer for the hidden layers. These architectures and datasets were chosen because they represent a broad spectrum in terms of performance. 
 For details on the hyperparameter values and hold-out validation, see Appendix~\ref{app:training_details}. The hold-out validation is performed on the datasets for each quantity of labeled data but with a single random seed. The best parameters found are used for all other random seeds. 

\begin{figure}[t!]
	\centering
	\includegraphics[width=1.0\linewidth,trim={14cm 0cm 7cm 0cm},clip]{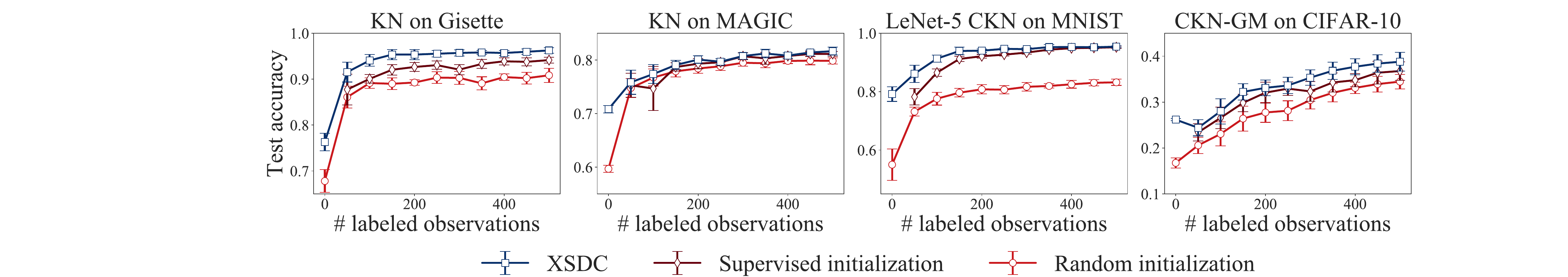}
	\caption{\label{fig:acc_vs_num_labeled} Average performance across 10 trials of XSDC when varying the quantity of labeled data. The error bars show one standard deviation from the mean.}
\end{figure}

\paragraph{Training.} The training is performed as follows. 
The network parameters are initialized by randomly sampling from the feature representations at each layer of the network. 
Then the network is trained for 100 iterations using the labeled data. 
Finally, the network is trained on the labeled and unlabeled data for 400 iterations, using matrix balancing to predict the labels of the unlabeled data. Unless otherwise specified, $n_{\min}=n_{\max}$ in the matrix balancing, i.e., all classes are assumed to be equally represented within each mini-batch. We evaluate the performance of the learned representations every 10 iterations.

\paragraph{Code.} The code for this project is written using Faiss, PyTorch, SciPy, and YesWeCKN~\citep{johnson2019,paszke2019, virtanen2020,jones2020}. It can be found online at \url{https://github.com/cjones6/xsdc}.

\subsection{Results}

\paragraph{Improvement with unlabeled data.} In the experiments we first compare the XSDC algorithm to two simple baselines: an initial supervised training of the classifier when the network has random weights (``random initialization'') and an initial supervised training of both the network and the classifier (``supervised initialization''). In the latter case the network is trained on only the labeled data. In both cases, when evaluating the performance the labels of the unlabeled data are first estimated using 1-nearest neighbor with the labeled data based on the learned features. The classifier is then trained on the labeled and unlabeled data. The reported accuracy of the supervised initialization is the test accuracy after 100 iterations. In contrast, the reported accuracy of XSDC when labeled data exists is the test accuracy observed at the iteration where the validation accuracy is highest. We report this value because the algorithm can overfit before 500 iterations. In the case where no labeled data exists we report the highest observed test accuracy. We performed 10 trials when varying the random seed and report the mean and standard deviation of the corresponding results.

We would expect that XSDC would provide an improvement over the supervised initialization when there are gains to be had from additional labeled data. Otherwise, we would expect training on additional unlabeled data to provide little to no benefit. This is what we see in Figure~\ref{fig:acc_vs_num_labeled}. 
Figure~\ref{fig:acc_vs_num_labeled} compares the accuracy of the XSDC algorithm to the initializations as the quantity of labeled data varies. From all of the plots we can see that the performance of XSDC relative to the supervised baseline is much larger when the quantity of labeled data is smaller. With 50 labeled examples the accuracy on Gisette increases by 4\% on average when using XSDC instead of the supervised baseline. On MAGIC the gain is more modest, at 0.8\%. For MNIST the gain is 10\%, while for CIFAR-10 it is 4\%. In contrast, for 500 labeled observations XSDC outperforms the supervised baseline by 2\% on Gisette but is only 0.6\% better than the baseline on MAGIC. The latter results make sense since the increase in performance of the supervised initialization with the quantity of labeled data has started leveling off by then. On MNIST the improvement when there are 500 labeled observations drops to 0.2\%, while on CIFAR-10 it is 6\%. Note that the drop in accuracy of XSDC on CIFAR-10 from zero to 50 labeled observations is likely because we report the highest observed test accuracy for the case of zero labeled observations. 

\def\tsnewidth{1.0}
\begin{figure}
\begin{subfigure}{.24\linewidth}
\includegraphics[width=\tsnewidth\columnwidth]{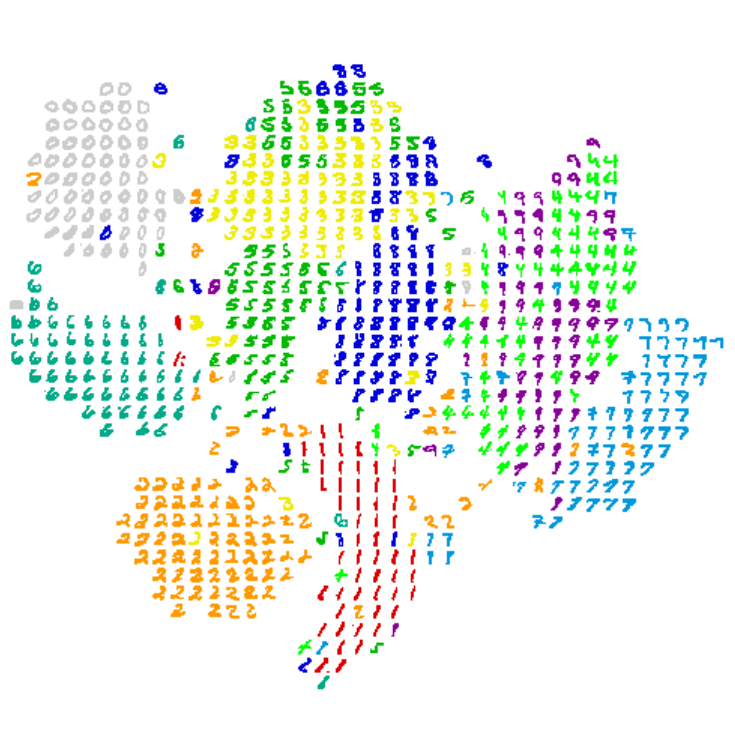}
\caption{\label{fig:tsne_raw}Raw}
\end{subfigure}
\begin{subfigure}{.24\linewidth}
\includegraphics[width=\tsnewidth\columnwidth]{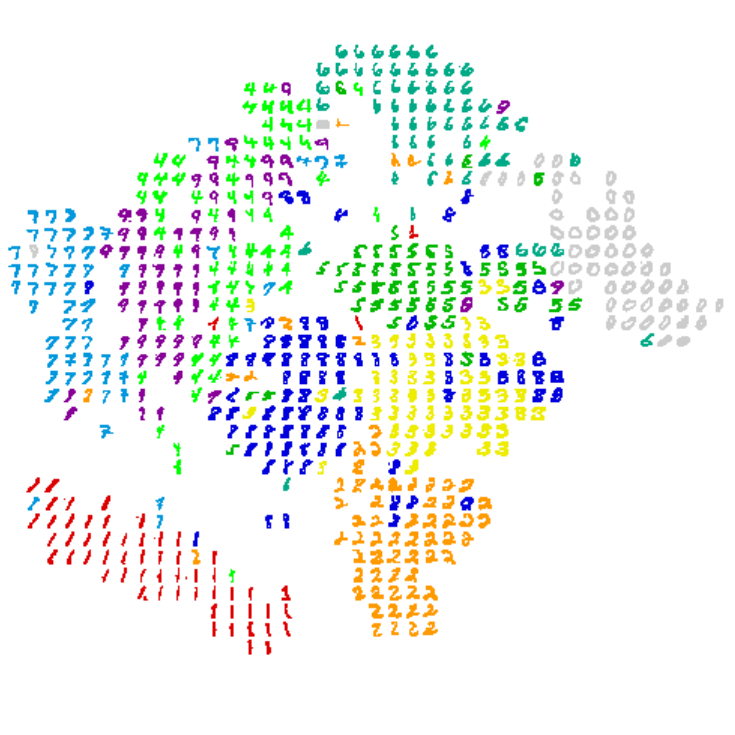}
\caption{\label{fig:tsne_unsup} Unsupervised init.}
\end{subfigure}
\begin{subfigure}{.24\linewidth}
\includegraphics[width=\tsnewidth\columnwidth]{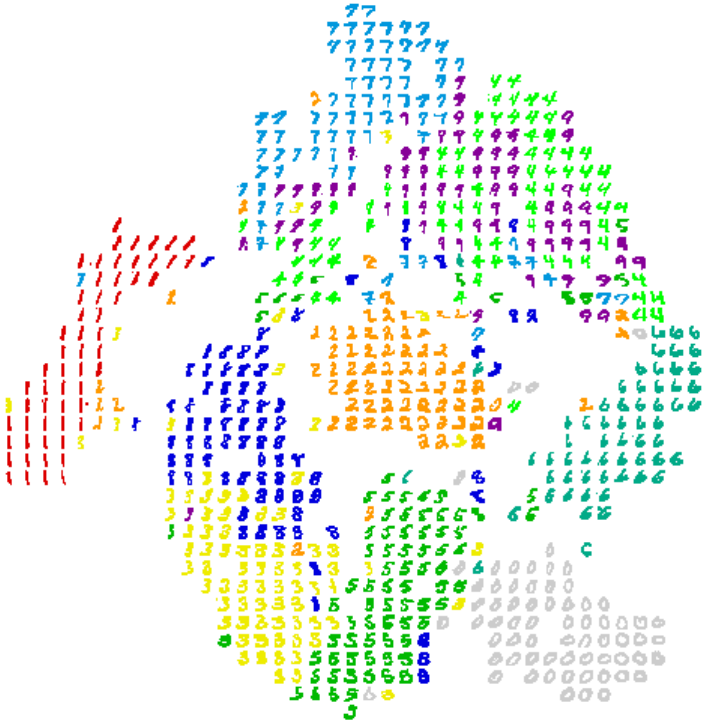}
\caption{\label{fig:tsne_sup} Supervised init.}
\end{subfigure}\hfill
\begin{subfigure}{.24\linewidth}
\includegraphics[width=\tsnewidth\columnwidth]{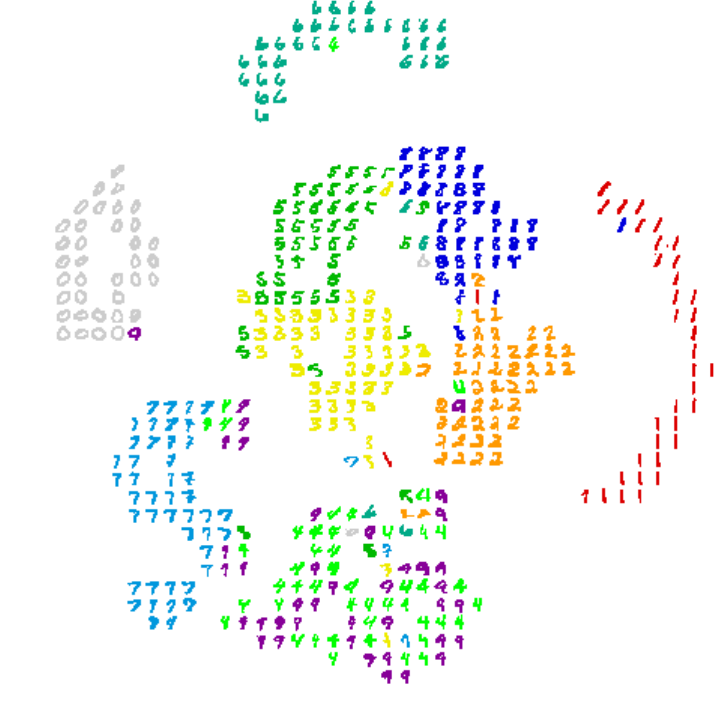}
\caption{\label{fig:tsne_xsdc} XSDC}
\end{subfigure}
\caption{\label{fig:mnist50_viz} Visualizations of the unlabeled MNIST features obtained when training the LeNet-5 CKN with 50 labeled observations (where applicable). The CKN features were projected to 2-D using t-SNE. The features were obtained at different stages, as indicated in the sub-captions.}
\end{figure}

\begin{figure}
	\includegraphics[width=\linewidth]{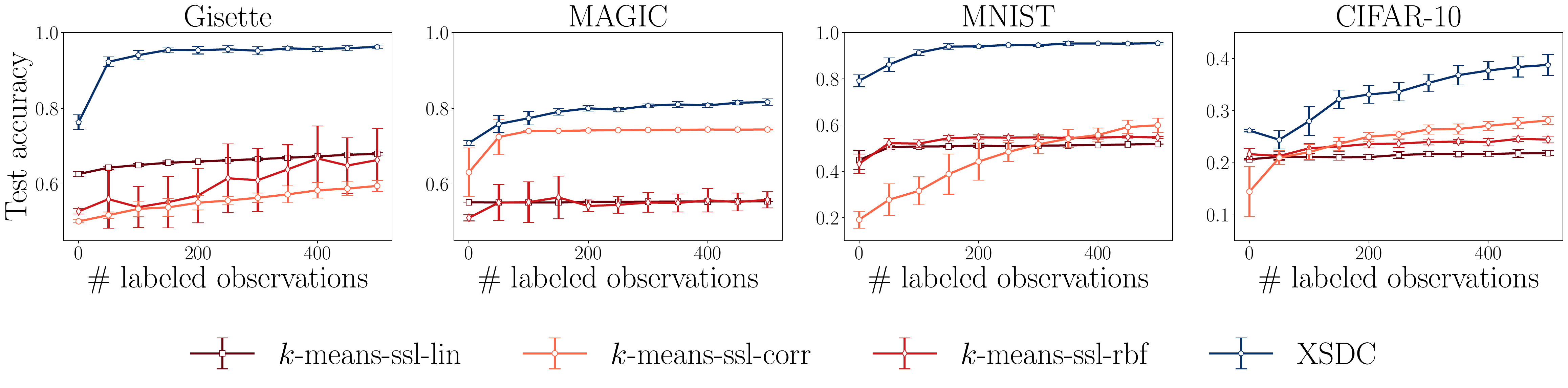}
	\caption{Average performance of semi-supervised $k$-means with various fixed metrics (from left to right: with a linear similarity measure, with a similarity measure defined by the inverse correlation matrix, with a non-linear similarity measure defined by an RBF kernel) compared to XSDC, when varying the quantity of labeled data. The error bars show one standard deviation from the mean.\label{fig:baseline}}
\end{figure}

There are two other noteworthy aspects of Figure~\ref{fig:acc_vs_num_labeled}. First, it shows that XSDC can improve over the unsupervised initialization even in the case where there is no labeled data. The relative improvement in accuracy over the unsupervised baseline ranges from 13\% on Gisette to 56\% on CIFAR-10 when no labeled data is present. Second, the standard deviation of the difference in the performance between the supervised baseline and XSDC tends to be larger when the gap in the performance between XSDC and the supervised baseline is larger, as expected. For example, on Gisette the standard deviation of the difference in the performance of XSDC and the supervised baseline is 4\% in the case of 50 labeled observations, but only 0.3\% in the case of 500 labeled observations.

We also visualize the results, examining the case where 50 images from MNIST are labeled.  Figure~\ref{fig:mnist50_viz} depicts the feature representations of a batch of 4096 unlabeled observations at various points of the training process. For each plot the feature representations were projected to 2-D using t-SNE \citep{vandermaaten2008}. For each square in a grid, the code checks whether any image's t-SNE representation lies in that square. If any such images exist, it chooses one at random and displays the original image in that square. The images are then color-coded according to the ground-truth labels.\footnote{The code to produce the plots was adapted from Andrej Karpathy's Matlab code, which can be found here: \url{https://cs.stanford.edu/people/karpathy/cnnembed/}.} Comparing Figures~\ref{fig:tsne_sup}  and \ref{fig:tsne_xsdc}, we can see that XSDC tends to increase the separation between clusters relative to the supervised initialization. This suggests that the inter-class distances between the feature representations learned by XSDC tend to be larger relative to the intra-class distances. The digits 4, 7, and 9 are a bit less separated. However, the digits 5 and 8 are each generally all in one cluster after running XSDC. 

\paragraph{Comparison to semi-supervised learning methods with a fixed representation.}
Our goal is to provide an algorithm that can both (i) learn a good representation of the data; and (ii) take advantage of all available data (labeled and unlabeled). To understand the benefits of learning a feature representation, we consider as a baseline semi-supervised $k$-means with seeding. This is a popular variant of $k$-means where the centroids are initialized by the labeled data and the assignments of the labeled data are fixed to their given labels~\citep{basu2002semi, yoder2017semi}. 
Once the clusters are found in the training data, we predict the label of new data points  by assigning them to the closest cluster. If no data points are present, we use $k$-means with $k$-means++ initialization.

The fixed similarity measures defining the clusters in our implementations of $k$-means are
(i) a linear similarity measure: $h(x, y) = x^\top y$,  which amounts to clustering points with respect to the squared Euclidean distances in the original feature space; (ii) a data-dependent similarity measure defined by the regularized inverse correlation matrix: $h(x, y) = x^\top (X^\top X/(n-1) +\lambda I)^{-1} y$, where $X= (x_1, \ldots, x_n)^\top \in \mathbb{R}^{n \times d}$ is the set of all standardized training points and $\lambda \geq 0$ is a regularization parameter; DIFFRAC~\citep{bach2007} implicitly uses an analogous similarity measure; and (iii) a non-linear similarity measure defined by a Gaussian Radial Basis Function (RBF) kernel: $h(x, y) = \exp\left(- \|x-y\|_2^2/(2\sigma^2)\right)$ for some bandwidth parameter $\sigma>0$, which amounts to clustering points in the reproducing kernel Hilbert space associated with $h$.

The bandwidth parameter of the RBF kernel is chosen using the median heuristic. Namely, we choose $\sigma = \sqrt{2 d}$, where $d = \textrm{Median}((\|x_i-x_j\|_2^2)_{1\leq i<j\leq n})$ is the median squared distance computed from all data (unlabeled and labeled ones)~\citep{fukumizu2009kernel}. The regularization parameter for the similarity measure based on the correlation matrix is chosen using the heuristic $\lambda = \lambda_{\max}(C)/\mathrm{Tr}(C)$, where $C = X^\top X/(n-1)$.

\begin{figure}[t]
	\centering
	\includegraphics[width=1.0\linewidth,trim={14cm 0cm 7cm 0cm},clip]{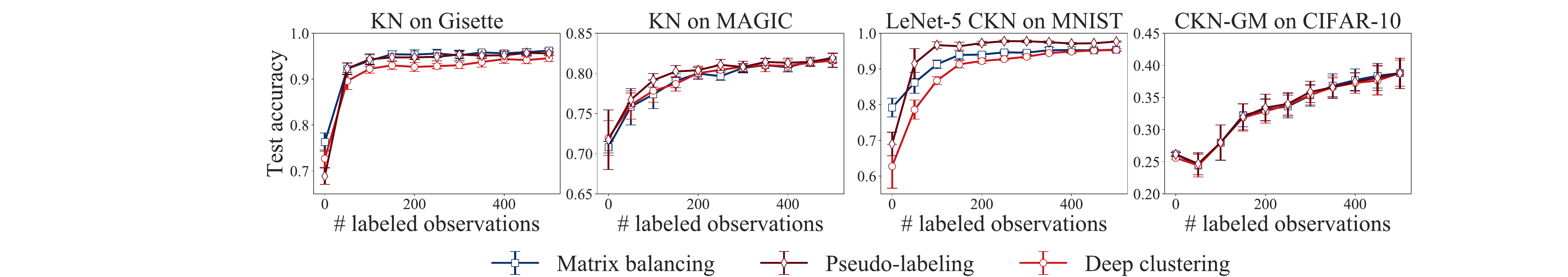}
	\caption{\label{fig:acc_vs_num_labeled_comparison} Average performance across 10 trials of XSDC with matrix balancing and two alternative labeling methods (pseudo-labeling and deep clustering) when varying the quantity of labeled data. The error bars show one standard deviation from the mean.}
\end{figure}

Our experimental setting is the same as before: we fix the total amount of data for each dataset (Gisette, MAGIC, MNIST, CIFAR) and vary the amount of labeled data. In all cases, we standardized the data before applying the algorithm.  We run the algorithm 10 times with a different set of labeled points each time or a different seed for the $k$-means++ initialization. We compare semi-supervised $k$-means to XSDC with the same architectures as the ones presented in Figure~\ref{fig:acc_vs_num_labeled_comparison}. In Figure~\ref{fig:baseline}, we observe an increase of performance of the semi-supervised $k$-means algorithm as the number of labels increase. However, in general, learning a feature representation as done by XSDC leads to better accuracy results even with a small amount of labeled data points.

\paragraph{Comparison to alternative labeling methods.} Next, we compare to two alternative labeling methods: pseudo-labeling \citep{lee2013} and deep clustering \citep{caron2018}.
Pseudo-labeling is a method designed to learn feature representations from labeled data and unlabeled data. Label assignment is performed by predicting labels from regression on the learned features. 
In contrast, deep clustering is a method designed to learn feature representations from unlabeled data and assign labels to unlabeled data. Label assignment is performed by $k$-means clustering with the learned features. Designing a variant working with both labeled data and unlabeled data was beyond the scope of \citet{caron2018}. 
See Appendix~\ref{app:training_competitors} for how we adapted pseudo-labeling and deep clustering to the unsupervised setting and the semi-supervised setting, respectively.

Figure~\ref{fig:acc_vs_num_labeled_comparison} displays results comparing the labeling method in XSDC (matrix balancing) to the labeling methods from pseudo-labeling and deep clustering. From the plots, we can see that the accuracy with matrix balancing and pseudo-labeling are only significantly different when training the LeNet-5 CKN on MNIST. However, both matrix balancing and pseudo-labeling typically outperform deep clustering when training the kernel network on Gisette and the LeNet-5 CKN on MNIST. On average, matrix balancing is 1-5\% better than deep clustering when training the kernel network on Gisette and 0.1-28\% better than deep clustering when training the LeNet-5 CKN on MNIST. These results suggest that for certain  architectures and datasets, using label information may be essential to achieving a performance close to the best possible one. On the other hand, the choice of how that label information is incorporated, whether it is by matrix balancing or pseudo-labeling, may matter less frequently in terms of the performance.

\begin{figure}[t]
	\begin{minipage}{0.48\linewidth}
	\centering
	\includegraphics[width=\linewidth]{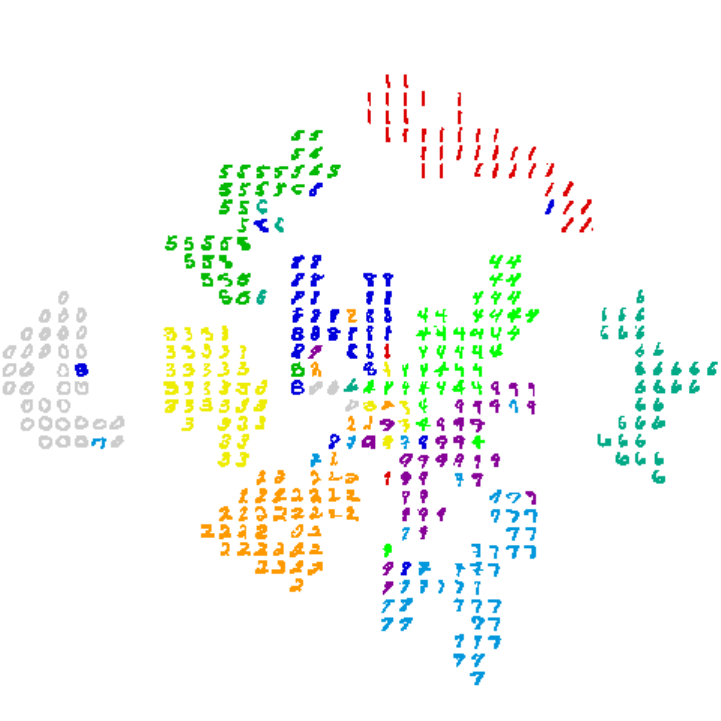}
	\caption{\label{fig:mnist50_500_viz_addl} Visualization of the unlabeled MNIST features obtained when training the LeNet-5 CKN with 50 labeled observations and additional known constraints. The CKN features were projected to 2-D using t-SNE. The constraints were derived from knowledge of whether the label for each unlabeled point lies in the set \{4,9\}.}
	\end{minipage}\hspace{5pt}
	\begin{minipage}{0.48\linewidth}
	\centering
	\vspace{30pt}
	\includegraphics[width=\linewidth]{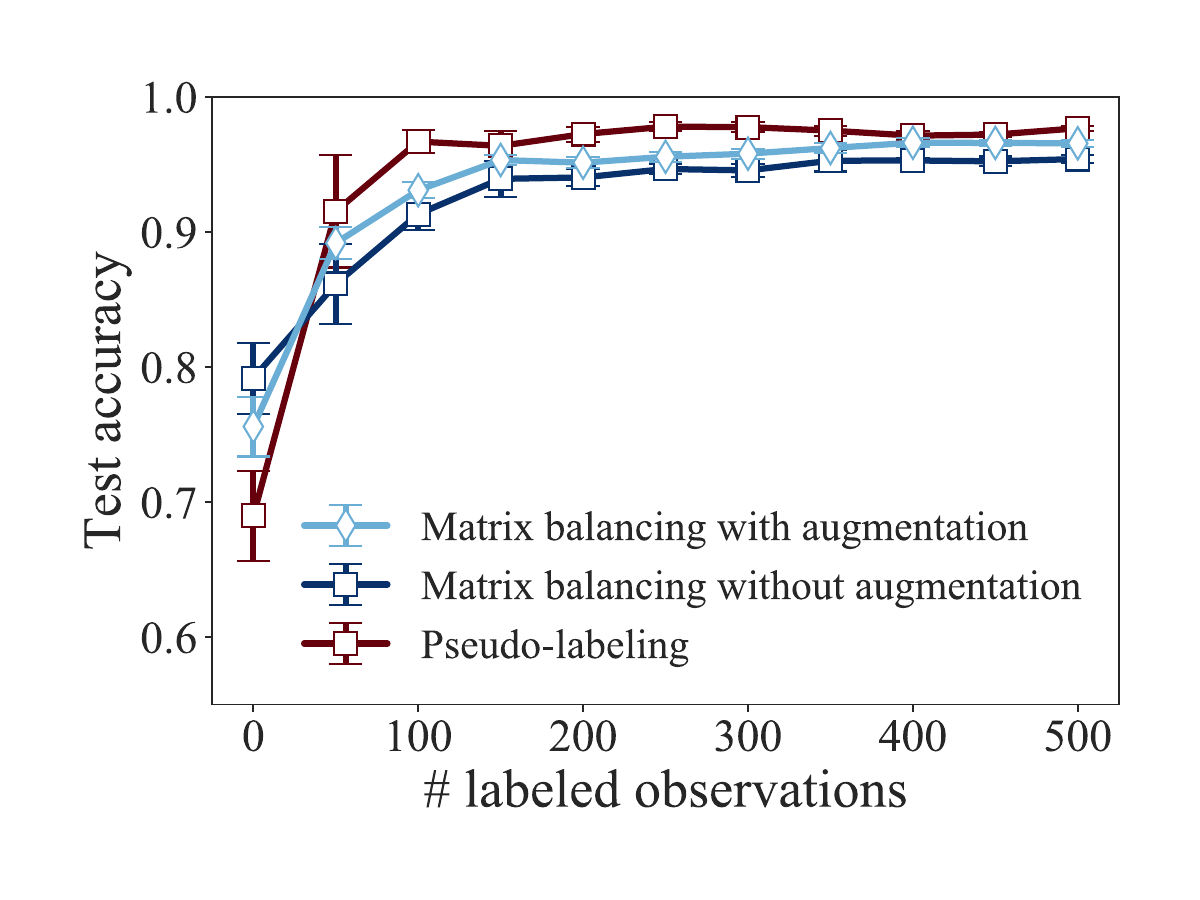}
	\caption{\label{fig:aug} Average performance across 10 trials of XSDC with matrix balancing with and without data augmentation and of XSDC with pseudo-labeling. The error bars show one standard deviation from the mean.}
	\end{minipage}
\end{figure}

\paragraph{Improvement with domain-specific knowledge.}
Recall that XSDC with matrix balancing is able to enforce must-link constraints regarding the labels, via the first constraint from problem~\eqref{eq:balancing_obj}. We can make use of this by performing data augmentation, i.e., generating copies of observations in the dataset with slightly modified features. We tried this on MNIST, using random rotations, random image widths, random shifts, and random erasures, as done by \citet{byerly2020}. The number of augmentations per batch was chosen from $2^i$, $i=0, 1, 2, \dots, 7$ via hold-out validation. Because of the increase in the number of images due to augmentations, the batch size was modified accordingly. When using XSDC we enforced constraints requiring the augmentations of the same image to have the same label. The results are shown in Figure~\ref{fig:aug}. We can see that this approach partly closes the observed gap between matrix balancing and pseudo-labeling that was observed in Figure~\ref{fig:acc_vs_num_labeled_comparison}.

\paragraph{Improvement with additional constraints.}
As noted in Section~\ref{sec:opt}, XSDC can seamlessly incorporate additional must-link and must-not-link constraints. To assess the benefit of adding such constraints, we provide additional experiments with the LeNet-5 CKN on MNIST. We consider two forms of additional constraints: (a) must-not-link constraints derived from knowledge of whether or not each unlabeled observation was from either class 4 or 9; and (b) random correct must-link and must-not-link constraints among pairs of unlabeled observations and random correct must-not-link constraints between pairs of unlabeled and labeled observations. The pairs of classes in (a) were selected because they are frequently confused. This attempts to mimic a situation in which a labeler knows that an observation belongs to one of two classes, but is not sure which one. Each random constraint in (b) was added with probability 1/3, yielding approximately the same number of constraints as (a). See Appendix~\ref{sec:addl_constraints} for additional details.

Figure~\ref{fig:mnist50_500_viz_addl} visualizes the feature representations resulting from constraints of the form (a) for the case of 50 labeled observations from MNIST.  Examining this figure, we can see that the clusters are generally well-separated, including the bright green, light blue, and purple clusters, which correspond to the digits 4, 7, and 9, respectively. Visually, this is an improvement over the t-SNE projections when the additional constraints are not used (\emph{cf.} Figure~\ref{fig:tsne_xsdc}).

Next, Figure~\ref{fig:addl_constraints} displays results comparing the test accuracy on MNIST when including and not including the additional constraints on the labels. As expected, adding the additional constraints generally improves the performance. The addition of random correct constraints results in the best performance, likely because these provide more knowledge related to the difficult-to-distinguish classes.

\begin{figure}[t]
\begin{minipage}{0.48\linewidth}
\centering
\includegraphics[width=\linewidth]{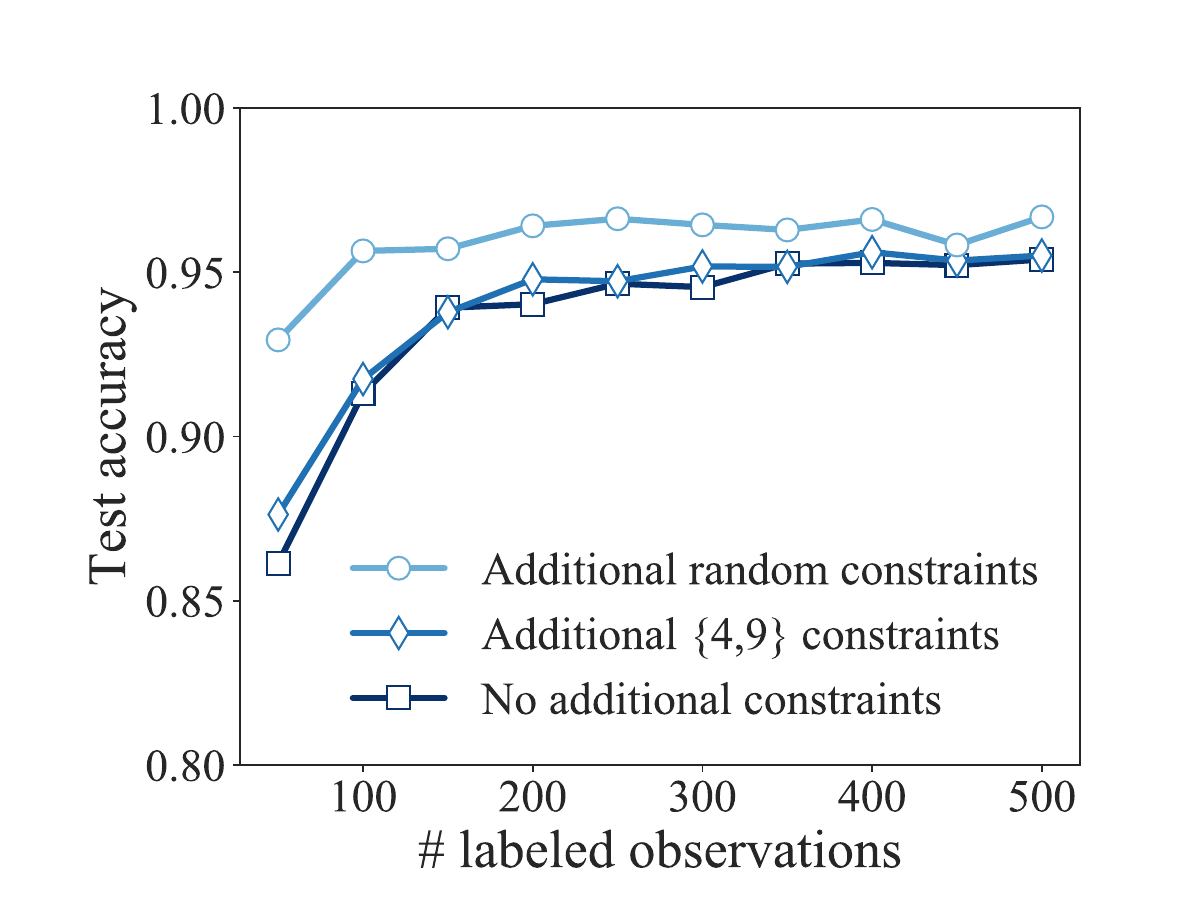}
\caption{\label{fig:addl_constraints} Average accuracy across 10 trials of XSDC after training a LeNet-5 CKN on MNIST when adding additional constraints.}
\end{minipage}~\hspace{5pt}
\begin{minipage}{0.48\linewidth}
	\centering
	\vspace{47pt}
\includegraphics[width=\linewidth]{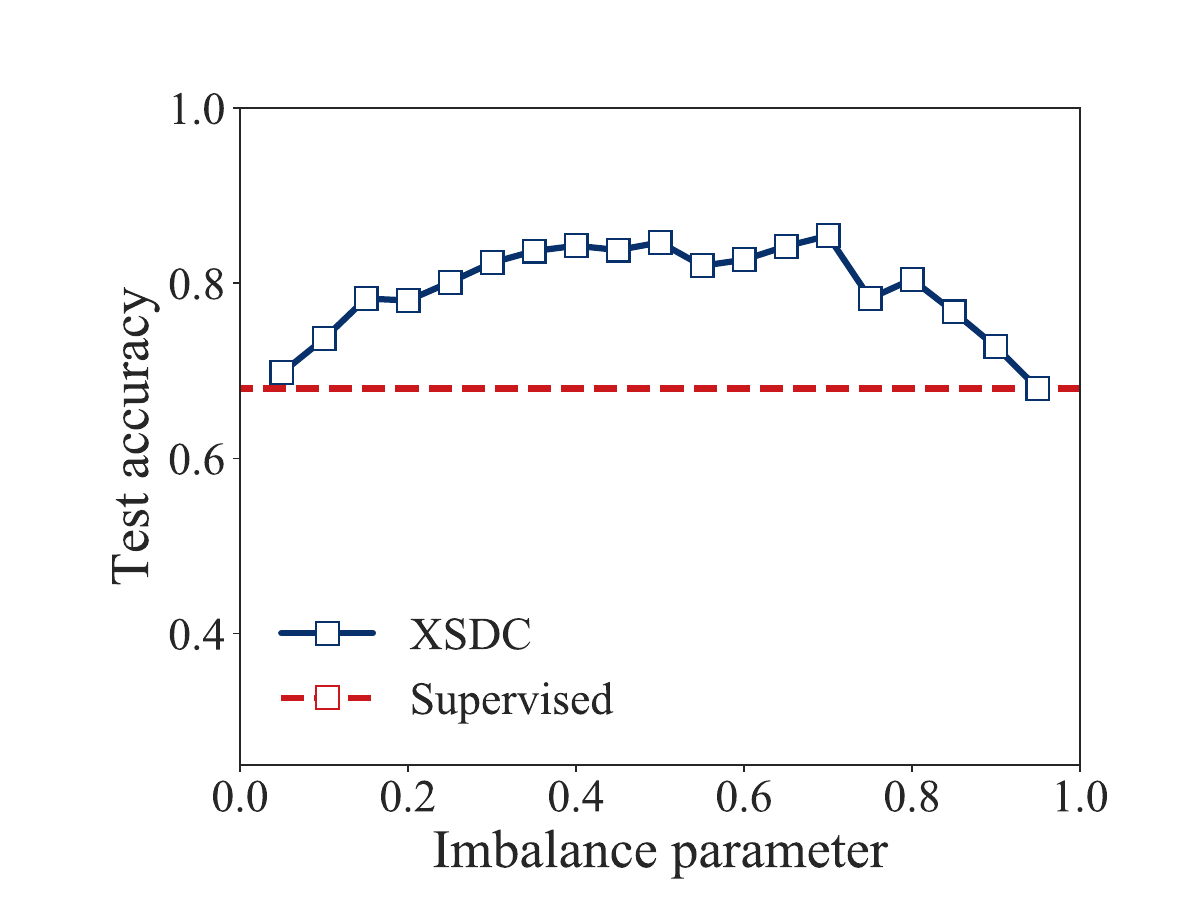}
\caption{\label{fig:imbalanced data_mnist} Average accuracy across 10 trials of XSDC after training a LeNet-5 CKN on MNIST when varying the fraction of labeled data. The imbalance parameter denotes the fraction of the labels that are from the set $\{0,1,2,3,4\}$. All classes in the set $\{0,1,2,3,4\}$ are equally represented, and similarly for $\{5,6,7,8,9\}$.}
\end{minipage}
\end{figure}

\paragraph{Performance with unbalanced data.}
The XSDC algorithm can handle unbalanced datasets by changing the bounds on the cluster sizes in the matrix balancing algorithm. To present an example of how XSDC performs on unbalanced unlabeled data we again trained the LeNet-5 CKN on MNIST. We used 50 labeled observations, equally distributed across classes. For the unlabeled data we varied the fraction of labels 0-4 and the fraction of labels 5-9 between 5\% and 95\%. For training we use the hold-out validation set to determine the bounds on the cluster sizes.

The results are presented in Figure~\ref{fig:imbalanced data_mnist}. Training with XSDC on both the labeled and unlabeled data is nearly always better than training on the labeled data only (dashed curve). As expected, the performance tends to be better for more balanced data. The best accuracy was 85\%, obtained with 70\% 0-4's, while the worst accuracy was 68\%, obtained with 95\% 0-4's. In contrast, the accuracy when training on only the labeled data was 68\%. These results suggest that as long as one believes that the unlabeled data is not extremely unbalanced, it could be beneficial to use it during training.

\begin{figure}[t]
\begin{subfigure}{.33\linewidth}
\includegraphics[width=1.0\linewidth]{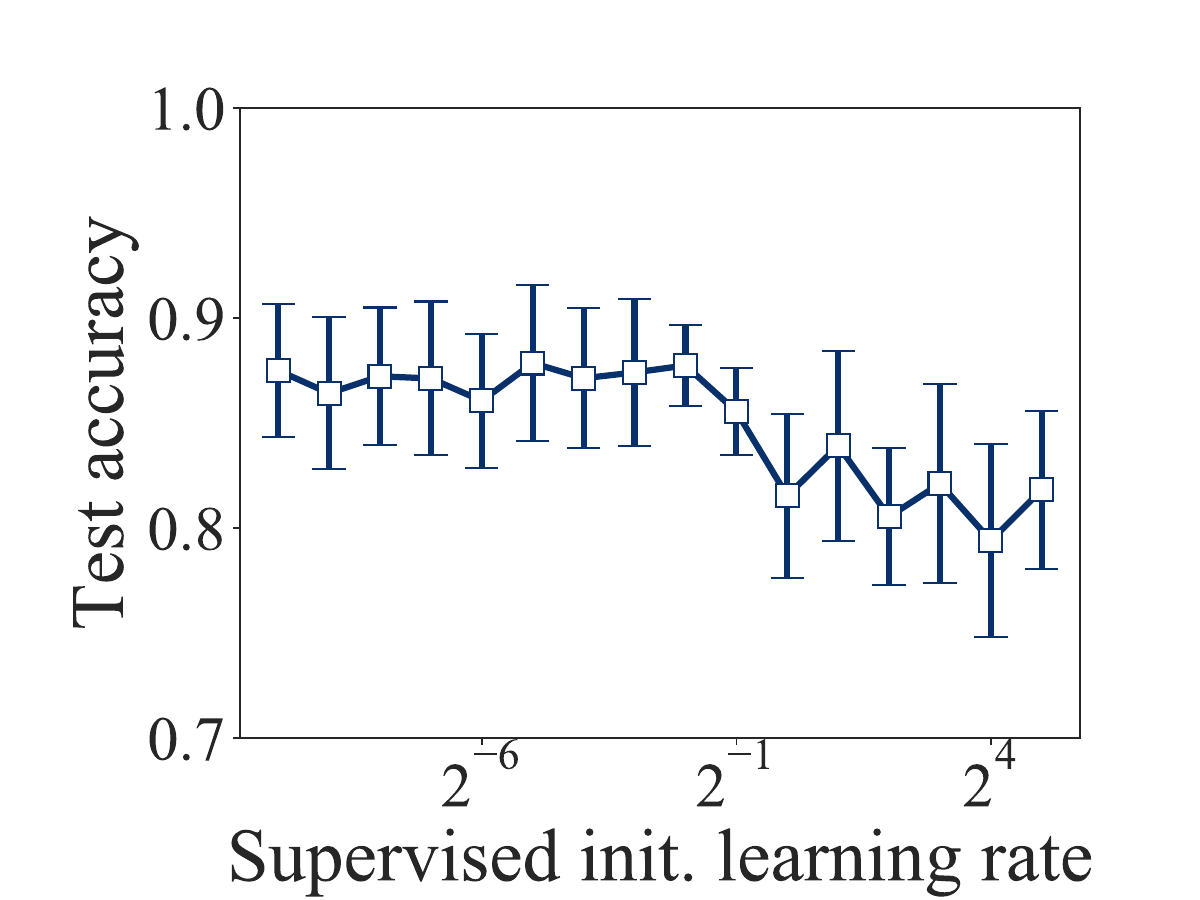}
\end{subfigure}\hfill
\begin{subfigure}{.33\linewidth}
\includegraphics[width=1.0\linewidth]{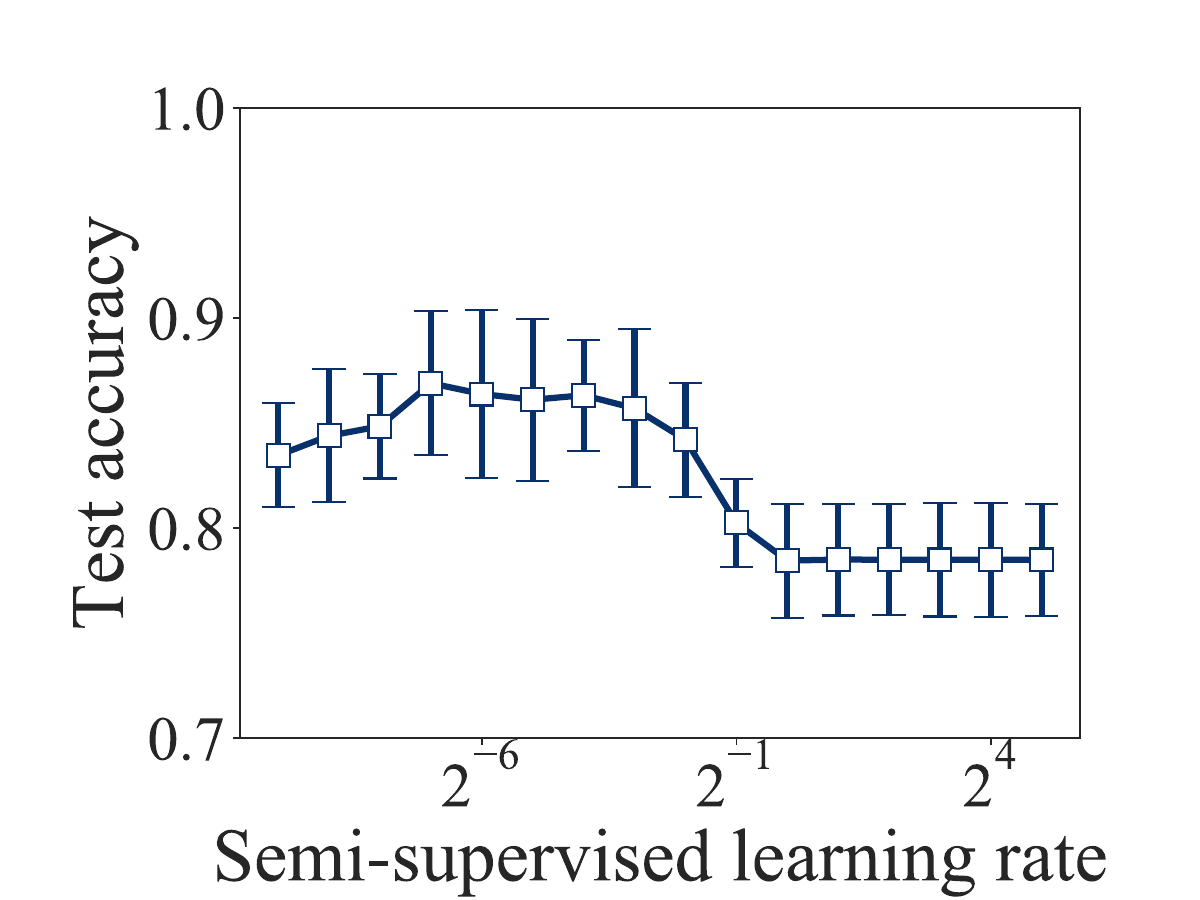}
\end{subfigure}\hfill
\begin{subfigure}{0.33\linewidth}
\begin{center}
\includegraphics[width=1.0\linewidth]{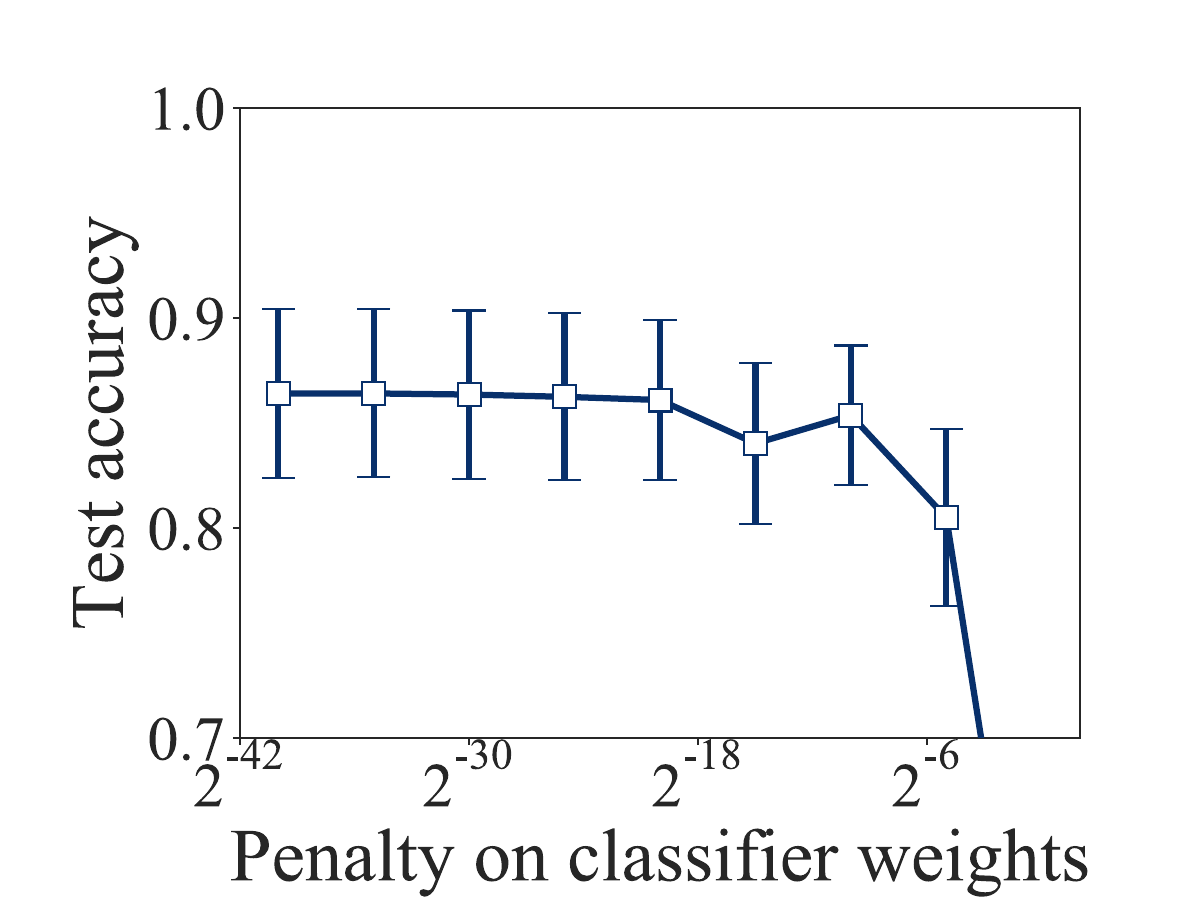}
\end{center}
\end{subfigure}
\caption{\label{fig:sensitivity} Sensitivity analysis of the hyperparameters tuned with hold-out validation when  using XSDC to train a LeNet-5 CKN on MNIST with 50 labeled observations.}
\end{figure}

\paragraph{Sensitivity to hyperparameters.}
The XSDC algorithm has three hyperparameters to tune in the semi-supervised case. In order to assess the importance of these parameters, we perform a sensitivity analysis, again for the LeNet-5 CKN on MNIST with 50 labeled observations. Figure~\ref{fig:sensitivity} displays the results when varying one parameter at a time, fixing the others to their values from hold-out validation. From the plots we can see that the parameter that requires the most careful tuning in this setting is the semi-supervised learning rate. The learning rate for the supervised initialization, along with the penalty on the classifier weights, just need to be sufficiently small.

\section{Conclusion}
In this work we presented a principled learning algorithm called XSDC that can be used on any amount of labeled and unlabeled data. In the special case of unsupervised learning the objective is a clustering objective in which the feature representation is also learned. In contrast, in the special case of supervised learning, the objective is a classification objective. We demonstrated the effectiveness of XSDC on four datasets, showing that when adding additional labeled data would help, substituting it with unlabeled data still often yields large performance improvements.

We designed our approach for situations in which the data can be processed in large batches and for partitioning problems such as unsupervised clustering or semi-supervised clustering. Going forward, it would be interesting to pursue a streaming version of this method that relaxes the large batch size requirement and that processes continuous streams of unlabeled and labeled data. It could also be interesting to think about whether something similar can be done for statistical problems with ordinal data or other data having a discrete structure. 
\section*{Acknowledgements}
The authors would like to thank the reviewers for their valuable comments that helped to improve the manuscript. 
The authors would like to gratefully acknowledge support from the National Science Foundation under grants NSF CCF-1740551 and NSF DMS-1810975, the program ``Learning in Machines and Brains'' of the Canadian Institute For Advanced Research, and faculty research awards. This work was first presented at the Women in Machine Learning Workshop in December 2019, for which the first author received travel funding from the National Science Foundation under grant NSF IIS-1833154. Part of this work was done while Corinne Jones was at the University of Washington. 

\bibliographystyle{agsm}
\bibliography{references,references_v2}

\clearpage
\appendix
\section*{\LARGE Appendix}
\section{Smoothness of the Objective Function}
\label{app:conditioning}
Recall the forward and reverse prediction objectives, which, given a set of labels $Y \in \{0,1\}^{n\times k}$ with $Y \ones_k = \ones_n$ and a feature representation $\Phi$, read respectively
\begin{align*}
	F_f(\Phi) & = \min_{W, b} \frac{1}{n}\|Y-\Phi W -\ones_n b^T\|_F^2+\regtwo\|W\|^2_F =   \regtwo\tr[YY^T\Pi_n(\Pi_n\Phi\Phi^T\Pi_n + n\regtwo \id)^{-1}\Pi_n] \\
	F_r(\Phi) & = \min_W \frac{1}{n}\|\Phi(V)-YW\|^2_F = \frac{1}{n}\tr[(\id-P_Y)\Phi\Phi^T]\;,
\end{align*}
where $\Pi_n = \id_n - \ones_n\ones_n^T/n$ and $P_Y=Y(Y^TY)^{-1}Y^T$ are orthonormal projectors. In this appendix we provide the proofs of Propositions~\ref{prop:smoothness_obj} and~\ref{prop:smoothness}, which estimate the smoothness constants of the above objectives.

\begin{proof}[Proof of Proposition~\ref{prop:smoothness_obj}]
	The gradient of the forward prediction objective is for $\Phi \in \mathcal{Z}$,
	\begin{align}
		\nabla F_f(\Phi)  = -2\regtwo \Pi_nG(\Phi) \Pi_nYY^T\Pi_n G(\Phi) \Pi_n\Phi\;, \label{eq:grad_forward_pred}
	\end{align}
	where $G(\Phi) = \left(\Pi_n\Phi \Phi^T\Pi_n{+} n\regtwo\id_{n}\right)^{-1}$.
	Since $\|\Pi_n\|_2 \leq 1$, $\|G(\Phi)\|_2\leq 1/(n\regtwo)$,  $\|YY^T\|_2\leq n_\text{max}$, where $n_{\max}$ is the maximal cluster size, and $\|\Phi\|_2 \leq B$ by assumption, 
	we obtain for any $\Phi \in \mathcal{Z}$, 
	\begin{align*}
		\|\nabla F_f(\Phi) \|_2 & \leq   \frac{2Bn_{\max}}{n^2\regtwo}=   \frac{2B\rho_{\max} }{n\regtwo} \eqqcolon L_f,
	\end{align*}
	where $\rho_{\max} = n_{\max}/n$.
	The gradient of the reverse prediction objective is
	$\nabla F_r(\Phi) = \frac{2}{n}(\id-P_Y)\Phi$,
	and can  be bounded as, for any $\Phi \in \mathcal{Z}$, 
	\begin{align}
		\|\nabla F_r(\Phi)\|_2 \leq 2B/n \eqqcolon L_r\;.
		\nonumber
	\end{align}
	Hence, taking $\regtwo \geq \rho_{\max}$, we have $L_f\leq L_r$. 
\end{proof}

\begin{proof}[Proof of Proposition~\ref{prop:smoothness}]
	Let $\Phi_1, \Phi_2 \in \mathcal{Z}$, denote $\bar Y = \Pi_n Y$, $\bar \Phi_1 = \Pi_n \Phi_1$, $\bar G_1 = \Pi_n G(\Phi_1)$, with $\bar \Phi_2$, $\bar G_2$ defined analogously and $G(\Phi)$ defined in the proof of Proposition~\ref{prop:smoothness_obj}. We decompose the difference of the gradients of the forward prediction defined in~\eqref{eq:grad_forward_pred} as 
	\begin{align*}
		-\frac{1}{2\lambda}(\nabla F_f(\Phi_1) {-} \nabla F_f(\Phi_2)) &= \bar G_1 \bar Y \bar Y^\top \bar G_1 \bar \Phi_1 - \bar G_2 \bar Y \bar Y^\top \bar G_2  \bar \Phi_2\\
		& = (\bar G_1 \bar Y \bar Y^\top \bar G_1 - \bar G_2 \bar Y \bar Y^\top \bar G_2) \bar \Phi_1 + \bar G_2 \bar Y \bar Y^\top \bar G_2 (\bar \Phi_1 - \bar \Phi_2) \\
		\bar G_1 \bar Y \bar Y^\top \bar G_1 - \bar G_2 \bar Y \bar Y^\top \bar G_2 & = \bar G_1\bar Y\bar Y^\top( \bar G_1 -\bar G_2) + (\bar G_1- \bar G_2)\bar Y\bar Y^\top \bar G_2\\
		G_1 -  G_2 & =  G_1 (G_2^{-1} -  G_1^{-1})G_2 \\
		G_2^{-1} -  G_1^{-1} & = \bar \Phi_2 \bar\Phi_2^\top   - \bar \Phi_1 \bar \Phi_1^\top = \bar \Phi_2 (\bar \Phi_2 - \bar \Phi_1)^\top + (\bar \Phi_2 - \bar \Phi_1) \bar \Phi_1^\top.
	\end{align*}
	The difference can readily be bounded using that (i) $\| \bar Y\bar Y^\top \|_2 \leq \|YY^\top \|_2 \leq n_{\max}$ with $n_{\max}$ the maximal cluster size,  (ii) $\|\bar \Phi_1 - \bar\Phi_2\|_2 \leq \|\Phi_1 - \Phi_2\|_2$, $\|\bar G_1 - \bar G_2\|_2 \leq \|G_1 - G_2\|_2$, and (iii) for $i \in \{1, 2\}$, $\|\bar G_i\|_2 \leq {1}/{(n\lambda)}$, $\|\bar \Phi_i\|_2\leq  B$ by assumption. Hence, we get 
	\begin{align*}
		\frac{1}{2\regtwo}\frac{\left\Vert \nabla F_f(\Avgfeat_1){-}\nabla F_f(\Avgfeat_2)\right\Vert_2}{\|\Avgfeat_1{-}\Avgfeat_2\|_2}
		\leq\; \left(\frac{4B^2n_{\max}}{n^3\regtwo^3} {+} \frac{n_{\max}}{n^2\regtwo^2}\right)\;,
	\end{align*}
	and so an upper bound on the Lipschitz constant of the gradients of the forward prediction objective  on $\mathcal{Z}$ is $\ell_f \coloneqq
	2\rho_{\max}/(n\regtwo) + 8B^2\rho_{\max}/(n\regtwo)^2
	$, where $\rho_{\max} = n_{\max}/n$.	
	For the reverse prediction objective, we have
	\begin{align*}
		\nabla F_r(\Avgfeat_1)-F_r(\Avgfeat_2)  = \frac{2}{n} (\id - P_Y) (\Phi_1 - \Phi_2)\;, 
	\end{align*}
	where $P_Y=Y(Y^TY)^{-1}Y^T$ is an orthonormal projector. Hence the Lipschitz constant of the gradients of the reverse prediction objective is at most $\ell_r := 2/n$.  	
	For $\regtwo \geq
	(\rho_{\max} + \sqrt{\rho_{\max}^2{+}16B^2\rho_{\max}})/2
	$, we therefore have $\ell_f\leq\ell_r$.
\end{proof}
\section{NP-Completeness of the Label Assignment Problem}
\label{app:balancing}
Now we address the problem of optimizing the labels for the unlabeled data. The following proposition shows that this discrete problem is in general NP-complete for $k>2$. Similar results were shown by~\cite{dahlhaus1994complexity}.

\begin{proposition}
\label{prop:balancing_npcomplete}
Let $A\in\mbr^{n\times n}$. The label assignment problem 
\begin{align*}
\min_Y\; &\tr(YY^TA)\\
\mbox{s.t.}\; &\sum_{j=1}^k Y_{ij}=1, \quad i=1,\dots, n, \ Y_{ij}\in\{0,1\} \quad \forall \; i=1,\dots, n,\; j=1,\dots, k
\end{align*}
is NP-complete for $k>2$.
\end{proposition}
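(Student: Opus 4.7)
The plan is to prove NP-completeness in the standard two steps: confirm membership in NP, then reduce from a canonical NP-complete problem. Membership in NP is immediate for the natural decision version ``does there exist a feasible $Y$ with $\tr(YY^TA)\leq t$?'': given a candidate $Y$, one checks the one-hot row constraints and evaluates $\tr(YY^TA)$ in $O(n^2k)$ time.

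For NP-hardness, I would reduce from graph $k$-colorability, which is NP-complete for every fixed $k\geq 3$. The crucial observation is that under the feasibility constraints, the matrix $M=YY^T$ is exactly the equivalence matrix of the induced partition: $M_{il}=1$ precisely when $i$ and $l$ lie in the same cluster and $M_{il}=0$ otherwise. Consequently
\[
\tr(YY^TA) \;=\; \sum_{i,l\,:\,c(i)=c(l)} A_{il},
\]
where $c(i)$ denotes the cluster index assigned to point $i$ by $Y$.

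Given an instance $G=(V,E)$ of $k$-colorability with $|V|=n$, I would construct the symmetric matrix $A$ defined by $A_{il}=1$ if $\{i,l\}\in E$ and $A_{il}=0$ otherwise (in particular $A_{ii}=0$). Then $\tr(YY^TA)$ equals twice the number of monochromatic edges of $G$ under the partition encoded by $Y$. The minimum objective value is therefore zero if and only if $G$ admits a proper $k$-coloring: a proper coloring interpreted as a cluster assignment produces a feasible $Y$ with no edge inside any cluster, and conversely any $Y$ achieving objective $0$ must place the endpoints of every edge in distinct clusters, yielding a proper $k$-coloring. The construction is polynomial ($A$ has $O(n^2)$ entries), so combined with NP membership this yields NP-completeness for $k\geq 3$.

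There is no serious obstacle; the argument is essentially a direct combinatorial encoding of $k$-colorability, and the main step to state carefully is the equivalence between zero-objective feasible solutions and proper $k$-colorings. One subtlety worth flagging is that the problem as posed does not require every cluster to be used, which harmlessly mirrors the fact that a proper $k$-coloring may use fewer than $k$ colors. An alternative route would reduce from max-$k$-cut or min-$k$-partition using the identity $\tr(YY^TA)=\sum_{i,l}A_{il}-\sum_{i,l:\,c(i)\neq c(l)}A_{il}$, but the $k$-coloring reduction has the advantage of producing the cleanest threshold $t=0$.
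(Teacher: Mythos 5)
Your proposal is correct and follows essentially the same route as the paper's proof: a reduction from graph $k$-colorability in which $A$ is taken to be the adjacency matrix of $G$, so that $\tr(YY^TA)$ counts (twice) the monochromatic edges and the minimum is zero if and only if $G$ is $k$-colorable. Your explicit verification of membership in NP for the decision version is a small addition the paper leaves implicit, but the core argument is identical.
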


\begin{proof}
The proof will follow by showing that the $k$-coloring problem is a special case of the matrix balancing problem. Let $G$ be an undirected, unweighted graph with no self-loops. Define $A\in\{0,1\}^{n\times n}$ to be the adjacency matrix of $G$. Then $G$  is $k$-colorable if and only if the following problem has minimum value zero:
\begin{align*}
\min_Y\; &\sum_{j=1}^k \sum_{i,i'\in A} Y_{i,j}Y_{i',j} \\
\mbox{s.t.}\; &\sum_{j=1}^k Y_{ij}=1, \quad i=1,\dots, n, \ Y_{ij}\in\{0,1\} \quad \forall \; i=1,\dots, n, \; j=1,\dots, k\;.
\end{align*}
Noting that 
$
\sum_{j=1}^k \sum_{i,i'\in A} Y_{i,j}Y_{i',j} = \tr(YY^TA),
$
we may rewrite the above problem as 
\begin{align*}
\min_Y\; &\tr(YY^TA) \\*
\mbox{s.t.}\; &\sum_{j=1}^k Y_{ij}=1, \quad \forall \; i=1,\dots, n, \ Y_{ij}\in\{0,1\} \quad \forall \; i=1,\dots, n,\;  j=1,\dots, k\;.
\end{align*}
This is a special case of the matrix balancing problem, in which $A$ is the adjacency matrix of a graph. Therefore, as the $k$-coloring problem is NP-complete for $k>2$ \citep{karp1975}, the label assignment problem with discrete assignments is also NP-complete for $k>2$.
\end{proof}

\section{An Alternative Relaxation}
\label{app:alt_label}
\citet{bach2007} propose alternative relaxations of the labeling subproblem. Define $\lambda_1 \leq \lambda_2 \leq \cdots \leq \lambda_n$ to be the eigenvalues of the equivalence matrix $M$ and let $\lambda_0\geq 0$. In Section 2.6 of their paper Bach and Harchaoui suggest solving the problem
\begin{align}
\min_{M \in \reals^{n \times n}} \quad  &
\Tr(M^T A) \label{eq:bh_relaxation} \\
\mbox{subject to} \quad &   M=M^T, \ \tr(M) = n, \ M \succeq 0, \ \sum_{i=1}^n \min\left\{\frac{\lambda_i}{\lambda_0}, 1\right\} \geq k \nonumber
\end{align}
in the unsupervised setting, for $A$ a symmetric matrix. 

\subsection{Derivation of the solution}
\begin{proposition}
	A solution for problem~\eqref{eq:bh_relaxation} is given by 
	\[
	M^\star=\sum_{i=1}^n \lambda_i^\star u_iu_i^T\;,
	\]
	where $u_1,\dots, u_n$ are eigenvectors corresponding to the eigenvalues $a_1\leq a_2,\dots\leq a_n$ of $A$ and 
	\begin{itemize}
		\item if $n>k$,  $\lambda_1^\star=n-(k-1)\lambda_0$, $\lambda_2^\star = \ldots =  \lambda_k^\star=\lambda_0$, $\lambda_{k+1}^\star= \ldots = \lambda_n^\star=0$,
		\item if $n=k$, $\lambda_1^\star,\dots, \lambda_k^\star=1$.
	\end{itemize}
\end{proposition}
\begin{proof}
Note that the symmetric and positive semi-definite constraints imply that we can write $M=U\Lambda U^T$ where $U$ contains an orthonormal set of eigenvectors of $M$ and $\Lambda\geq0$ is a diagonal matrix containing the corresponding eigenvalues. After rewriting $\Tr(M^T A)=\sum_{i=1}^n \lambda_iu_i^TAu_i$, with $u_i = U_{\cdot, i}$, we obtain the problem
\begin{align*}
\min_{U \in \reals^{n \times n}, \lambda_1,\dots, \lambda_n\in\mbr} \quad  &
\sum_{i=1}^n \lambda_iu_i^TAu_i \\
\mbox{subject to} \quad & \sum_{i=1}^n \lambda_i = n, \ \sum_{i=1}^n \min\left\{\frac{\lambda_i}{\lambda_0}, 1\right\} \geq k, \ u_i^Tu_i = 1 \ \forall i, \  u_i^Tu_j = 0 \ \forall i\neq j, \ \lambda_i \geq 0 \quad \forall i \;.
\end{align*}
Introducing Lagrange multipliers and defining the Lagrangian 
\begin{align*}
\mcl(U, \Lambda, \alpha, \beta, \gamma, \delta, \epsilon) 
& = \sum_{i=1}^n \lambda_iu_i^TAu_i + \alpha\left(\sum_{i=1}^n \lambda_i - n\right) - \beta\left(\sum_{i=1}^n \min\left\{\frac{\lambda_i}{\lambda_0}, 1\right\} - k\right) \\
& \quad + \sum_{i=1}^n \gamma_i(u_i^Tu_i - 1) + \sum_{i\neq j} \delta_{ij} u_i^Tu_j - \sum_{i=1}^n \epsilon_i\lambda_i \;, 
\end{align*}
we can rewrite the problem as
\begin{align*}
\max_{\alpha\in\mbr, \beta\in\mbr, \gamma\in\mbr^n, \delta\in\mbr^{n^2}} \min_{U \in \reals^{n \times n}, \lambda_1,\dots, \lambda_n\in\mbr} \quad  &
\mcl(U, \Lambda, \alpha, \beta, \gamma, \delta, \epsilon) \\
\mbox{subject to} \quad & \beta \geq 0, \;\; \epsilon_i \geq 0  \;\;\; \forall i\;,
\end{align*}
where $\alpha\in\mbr, \beta\in\mbr, \gamma\in\mbr^n, \delta\in\mbr^{n^2}, \epsilon\in\mbr^n$ and we define $\delta_{ii}=0$ for all $i$. The optimal parameter values must satisfy the first order conditions
\begin{align}
&2\lambda_i^\star Au_i^\star + 2\gamma_i^\star u_i^\star + \sum_{i\neq j} \delta_{ij}^\star u_j^\star = 0 \quad \forall i \label{eq:alt_label_foc2}\\
& {u_i^\star}^TAu_i^\star  + \alpha^\star  - \beta^\star \left[\frac{1}{2\lambda_0}(1-\text{sign}(\lambda_i^\star -\lambda_0))\right] - \epsilon_i^\star \ni 0 \quad \forall i \nonumber \;.
\end{align}
From line~\eqref{eq:alt_label_foc2} we can see that $UAU^T$ is diagonal, and hence $U$ consists of a set of eigenvectors of $A$. Defining $0 \leq a_1\leq a_2\leq \cdots \leq a_n$ to be the eigenvalues of $A$, we can then rewrite the problem as 
\begin{align}
\min_{\lambda_1,\dots, \lambda_n\in\mbr} \quad  &
\sum_{i=1}^n \lambda_ia_i \\
\mbox{subject to} \quad & \sum_{i=1}^n \lambda_i = n, \ \sum_{i=1}^n \min\left\{\frac{\lambda_i}{\lambda_0}, 1\right\} \geq k \label{eq:lam_constraint}, \ \lambda_i \geq 0 \quad \forall i \;. 
\end{align}

To solve this, consider a possible solution $\tilde\lambda_1,\dots, \tilde\lambda_n$. We will consider several cases. First, suppose there exists $i < j$ such that $\tilde\lambda_i, \tilde\lambda_j > \lambda_0$. Then define $\tilde \lambda_i'= \tilde \lambda_i + \tilde\lambda_j-\lambda_0$, $\tilde\lambda_j'=\lambda_0$, and $\tilde \lambda_m'=\tilde\lambda_m$ for $m\notin \{i,j\}$. Since $\tilde\lambda_i', \tilde\lambda_j' \geq \lambda_0$ and $\tilde\lambda_i'+\tilde\lambda_j'=\tilde\lambda_i+\tilde\lambda_j$ the constraints are still satisfied. Therefore, since $a_i\leq a_j$, $\sum_{i=1}^n \tilde\lambda_i'a_i \leq \sum_{i=1}^n \tilde\lambda_ia_i$, and so we know that there always exists an optimum with at most one $i$ such that $\lambda_i>\lambda_0$. Moreover, suppose that this index $i$ is larger than 1. Then we could set $\tilde \lambda_i'= \tilde \lambda_1$, $\tilde\lambda_1'=\tilde\lambda_i$, and $\tilde \lambda_m'=\tilde\lambda_m$ for $m\notin \{1,i\}$, thereby obtaining $\sum_{i=1}^n \tilde\lambda_i'a_i \leq \sum_{i=1}^n \tilde\lambda_ia_i$. Thus, there always exists an optimum $\lambda_1^\star,\dots, \lambda_n^\star$ with $\lambda_2^\star,\dots, \lambda_n^\star\leq \lambda_0.$

Next, suppose there exists $i<j$ such that $0<\tilde\lambda_i, \tilde\lambda_j < \lambda_0$. Then define $\tilde \lambda_i'=\tilde\lambda_i + \min\{\lambda_0-\tilde\lambda_i, \tilde\lambda_j\}$, $\tilde\lambda_j'=\tilde\lambda_j-\min\{\lambda_0-\tilde\lambda_i, \tilde\lambda_j\}$, and $\tilde \lambda_m'=\tilde\lambda_m$ for $m\notin \{i,j\}$. Since $\tilde\lambda_i', \tilde\lambda_j' \leq \lambda_0$ and $\tilde\lambda_i'+\tilde\lambda_j'=\tilde\lambda_i+\tilde\lambda_j$ the constraints are still satisfied. Therefore, since $a_i\leq a_j$, $\sum_{i=1}^n \tilde\lambda_i'a_i \leq \sum_{i=1}^n \tilde\lambda_ia_i$, we know that there always exists an optimum with at most one $i$ such that $0<\lambda_i<\lambda_0$. Now suppose that this $i$ is not the largest index such that $\lambda_i>0$. Then there exists an optimum with a $j>i$ such that $\lambda_j=\lambda_0$. Then we could set $\tilde \lambda_i'= \tilde \lambda_j$, $\tilde\lambda_j'=\tilde\lambda_i$, and $\tilde \lambda_m'=\tilde\lambda_m$ for $m\notin \{i,j\}$, thereby obtaining $\sum_{i=1}^n \tilde\lambda_i'a_i \leq \sum_{i=1}^n \tilde\lambda_ia_i$. Thus, there always exists an optimum $\lambda_1^\star,\dots, \lambda_n^\star$ with $\lambda_1^\star\geq \lambda_0$, $\lambda_2^\star,\dots, \lambda_{i-1}^\star=\lambda_0$, $0\leq\lambda_i^\star\leq \lambda_0$ for some $i$, and, if $i\neq n$, $\lambda_{i+1}^\star,\dots, \lambda_n^\star=0$.

Now from the constraint $\sum_{i=1}^n \min\left\{\frac{\lambda_i}{\lambda_0}, 1\right\} \geq k$, we can see that there must exist at least $k$ non-zero $\lambda_i$'s in the solution. 
If $n=k$, then we must have $\lambda_0=1$ and hence the optimum is given by $\lambda_1^\star,\dots, \lambda_k^\star=1$. Now consider the case where $n>k$.
Suppose there exists a solution $\tilde\lambda_1,\dots, \tilde\lambda_n$ such that $\tilde\lambda_{k+1} \neq  0$. Then since $\tilde\lambda_1,\dots, \tilde\lambda_k \geq \lambda_0$ we can set $\tilde\lambda_1'=\tilde\lambda_1+\tilde\lambda_{k+1}$, $\tilde\lambda_{k+1} = 0$, and $\tilde\lambda_j'=\tilde\lambda_j$ for $j\notin \{1,k+1\}$. This once again satisfies the constraints and $\sum_{i=1}^n \tilde\lambda_i'a_i \leq \sum_{i=1}^n \tilde\lambda_ia_i$. Therefore, there exists a solution such that $\lambda_1\geq \lambda_0$ and $\lambda_2,\dots, \lambda_k=\lambda_0$. In particular, a solution is $\lambda_1^\star=n-(k-1)\lambda_0$, $\lambda_2^\star,\dots, \lambda_k^\star=\lambda_0$.

In summary, the optima of this problem depend on the values of $k$ and $n$. In particular, we have:
\begin{itemize}
\item If $n>k$, an optimum  is given by $\lambda_1^\star=n-(k-1)\lambda_0$, $\lambda_2^\star,\dots, \lambda_k^\star=\lambda_0$, $\lambda_{k+1}^\star,\dots, \lambda_n^\star=0$.
\item If $n=k$, the optimum is given by $\lambda_1^\star,\dots, \lambda_k^\star=1$.
\end{itemize}
Returning to the original problem \eqref{eq:bh_relaxation}, we therefore have that an optimal $M$ is 
\begin{align*}
M^\star=\sum_{i=1}^n \lambda_i^\star u_iu_i^T\;,
\end{align*}
where $u_1,\dots, u_n$ are eigenvectors corresponding to the eigenvalues $a_1\leq a_2,\dots\leq a_n$ of $A$ and where $\lambda_1^\star,\dots, \lambda_n^\star$ are as defined above. 
\end{proof}

\subsection{Comparison to the XSDC relaxation}
We now compare the convex relaxation of the labeling subproblem presented in Section~\ref{sec:opt} to the relaxation proposed by \citet{bach2007}. As accommodating constraints on cluster labels is less natural in the latter relaxation, we compare the relaxations when training a LeNet-5 CKN on MNIST with no labeled data. Figure~\ref{fig:relax_comparison_acc} compares our matrix balancing method, the eigendecomposition method from the previous subsection, and the eigendecomposition method followed by $k$-means clustering. Prior to clustering, the rows of the eigenvector matrix were normalized to have unit $\ell_2$ norm. The value of $\lambda_0$ was chosen from the set $\{0.01n_b, 0.02n_b,\dots, 0.1n_b\}$, where $n_b$ is the size of a mini-batch, based on the performance on the validation set.

\begin{figure}[t]
	\begin{minipage}{0.48\linewidth}
		\centering
	\includegraphics[width=1.0\linewidth]{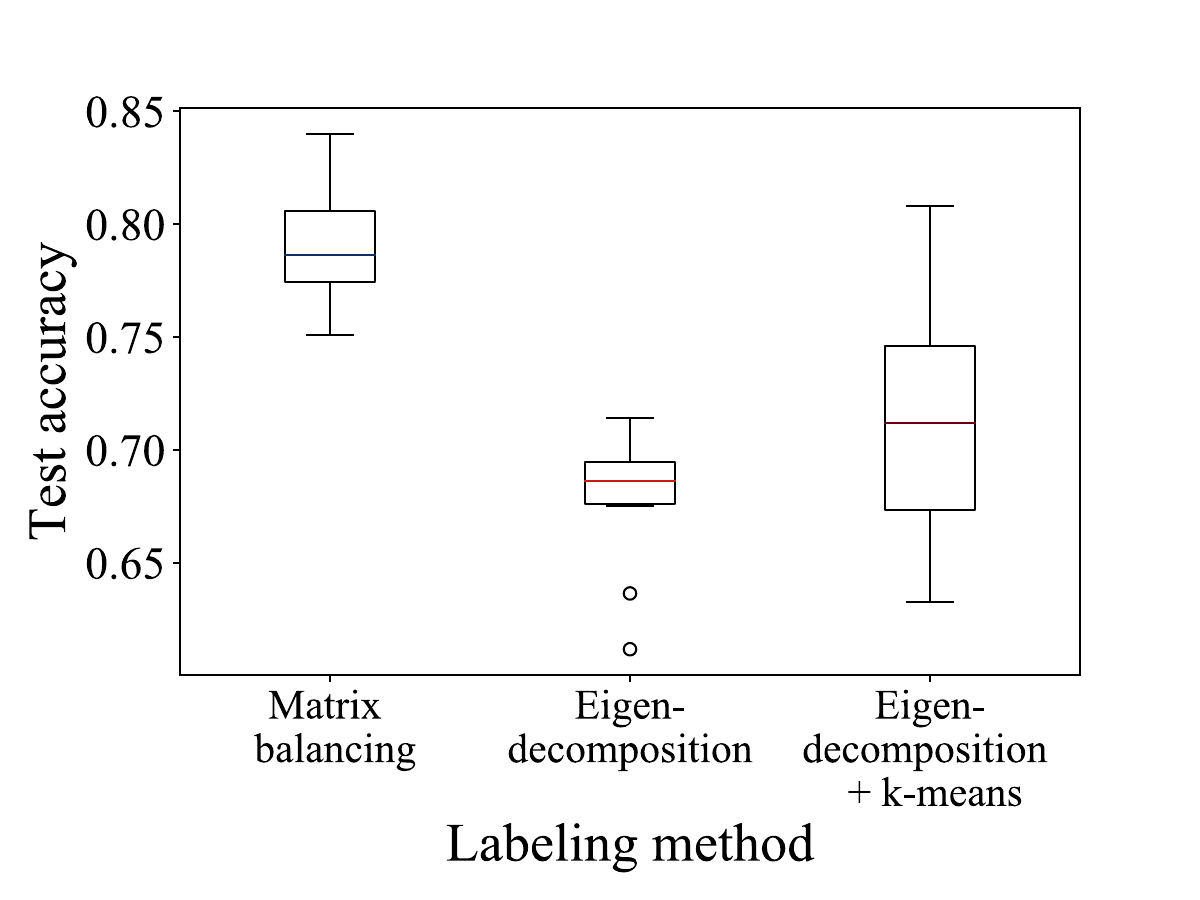}
	\caption{\label{fig:relax_comparison_acc} Performance of matrix balancing in comparison to eigendecomposition-based methods across 10 trials of training a LeNet-5 CKN on MNIST with no labeled data.}
	\end{minipage}\hspace{5pt}
	\begin{minipage}{0.48\linewidth}
	\centering
	\vspace{10pt}
	\includegraphics[width=1.0\linewidth]{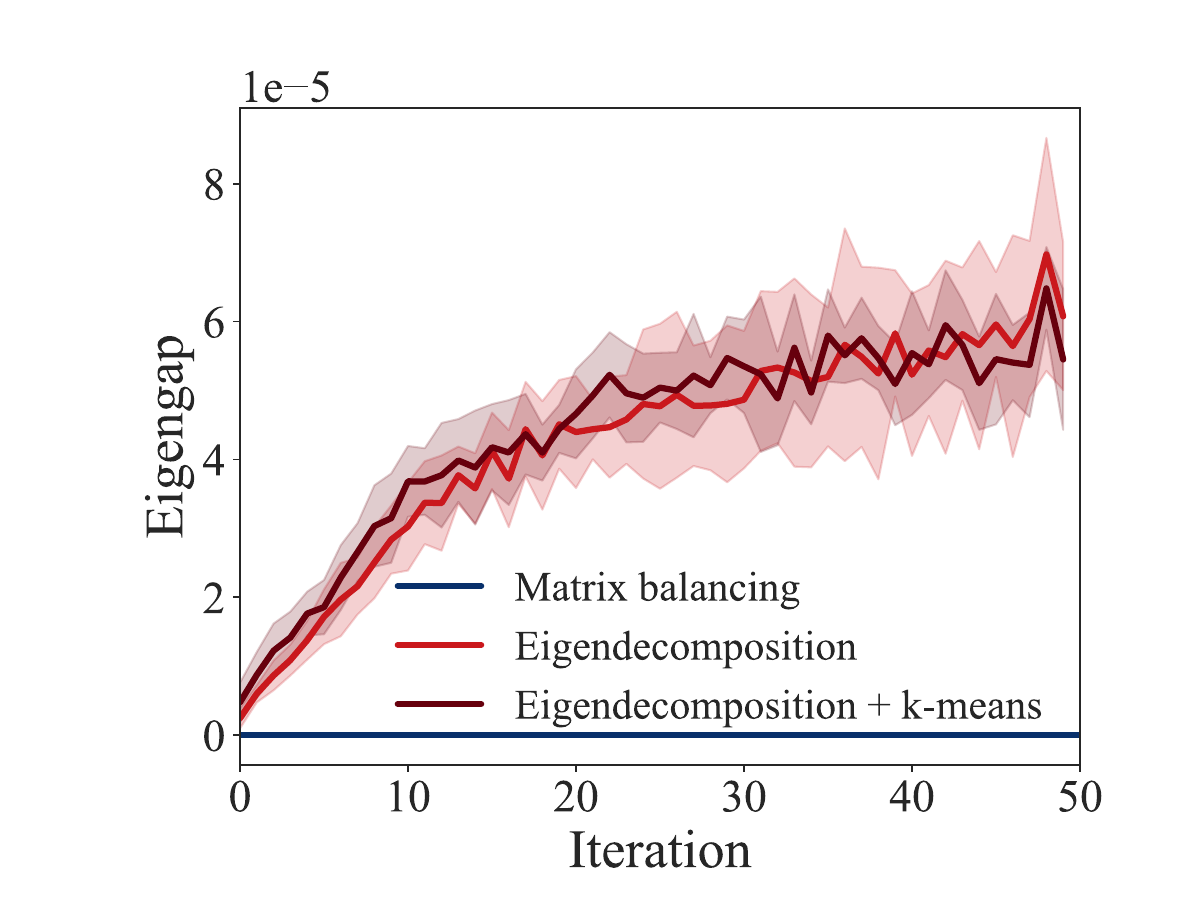}
	\caption{\label{fig:relax_comparison_gap} Evolution of the eigengap from matrix balancing in comparison to eigendecomposition-based methods across 10 trials of training a LeNet-5 CKN on MNIST with no labeled data. The error bands show one standard deviation from the mean.}
	\end{minipage}
\end{figure}

From Figure~\ref{fig:relax_comparison_acc} we can see that the convex relaxation used to derive the matrix balancing method is superior to the relaxations leading to the eigendecomposition-based methods. On average, matrix balancing performs 17\% better than the eigendecomposition method and 12\% better than the eigendecomposition method followed by $k$-means. This suggests that the constraint from the convex relaxation requiring the diagonal of $M$ to consist of all $1$'s and/or the constraint requiring all entries of $M$ to be positive are important for the performance of the labeling method.

In Figure~\ref{fig:relax_comparison_gap} we examine the eigengap of $A$ across iterations. The eigengap is defined as $\lambda_{k+1}-\lambda_{k}$, where $k$ is the number of classes and $\lambda_1\leq \dots\leq \lambda_n$ are the eigenvalues of $A$. As noted by \citet{meila2005}, having a larger eigengap makes the subspace spanned by the first $k$ eigenvectors of $A$ more stable to perturbations. From the figure we can see that the eigendecomposition-based methods tend to increase the eigengap as the learning proceeds. For the eigendecomposition method, the eigengap increased from $2\times10^{-6}$ to  $6\times10^{-5}$ on average after 50 iterations. Similarly, for the eigendecomposition method followed by $k$-means, the eigengap increased from  $5\times10^{-6}$ to  $5\times10^{-5}$ on average after 50 iterations. It is interesting to note that matrix balancing, which does not yield low-rank solutions $M^\star$, leads to eigengaps that are extremely small (on the order of $10^{-15}$) across the iterations. Nevertheless, it outperforms the eigendecomposition-based methods.

\section{Additional Experimental Details}
\label{app:training_details}
Here we provide additional details related to the training and the additional constraints we consider.

\subsection{Parameter tuning}
\label{sec:parameter_tuning}
The algorithm proposed in this paper and the models used require a large number of parameters to be set. Next, we discuss the choices for these parameters.

\paragraph{Fixed parameters.}
The parameters that are fixed throughout the experiments and not validated are as follows. The number of filters in the networks is set to 32 and the network's parameters $V$ are initialized layer-wise with 32 feature maps drawn uniformly at random from the output of the previous layer. The networks use the Nystr\"{o}m method to approximate the kernel at each layer. 
The regularization in the Nystr\"{o}m approximation is set to 0.001, and 20 Newton iterations are used to compute the inverse square root of the Gram matrix on the parameters $V_\ell$ at each layer $\ell$~\citep[Chapter 2]{jones2020}. 
The bandwidth is set to the median pairwise distance between the first 1000 observations for the single-layer networks. It is set to 0.6 for the convolutional networks. The batch size for both the labeled and unlabeled data is set to 4096 for Gisette and MAGIC and 1024 for MNIST and CIFAR-10 (due to GPU memory constraints).  The features output by the network $\phi$ are centered and normalized so that on average they have unit $\ell_2$ norm, as in \citet{mairal2014}. The initial training phase on just the labeled data is performed for 100 iterations, as the validation loss has typically started leveling off by 100 iterations.  The entropic regularization parameter $\entreg$ in the matrix balancing is set to the median absolute value of the entries in $A$. If this value results in divergence of the algorithm, it is multiplied by a factor of two until the algorithm converges. The value $n_\Delta$ is set to zero unless otherwise specified. The number of iterations of alternating minimization in the matrix balancing algorithm is set to 10.  The number of nearest neighbors used for estimating the labels on the unlabeled data is set to 1.

\paragraph{Hold-out validation.}
Due to the large number of hyperparameters, we tune them sequentially as follows when labeled data, and hence a labeled validation set, exists. 
First, we tune the penalty $\lambda$ on the classifier weights over the values $2^i$ for $i=-40,-39,\dots,0$. To do so, we train the classifier on only the labeled data using the initial random network parameters. We then re-validate this value every 100 iterations. Next, we tune the learning rate for the labeled data. 
For a small value $\reg=2^{-4}$ of the regularization parameter for the weights of the network, 
 we validate the fixed learning rate for the labeled data over the values $2^i$ for $i=-10,-9,\dots, 5$. 
The labeled and unlabeled data are then used to train the classifier used to compute the performance. 
For the unbalanced experiments on MNIST only we then tune the minimum and maximum size of the classes over the values $0.01b, 0.02b,\dots, 0.2b$, where $b$ is the batch size (fixing the semi-supervised learning rate to $2^{-5}$). For all other experiments we fix these values to $b/k$, where $k$ is the number of classes in the dataset.
We then tune the semi-supervised learning rate, again over the values $2^i$ for $i=-10,-9,\dots, 5$. 
For the single-layer networks we then tune $\reg$ over the values $2^i$ for $i=-10,-9,\dots, 10$.  For the convolutional networks we do not penalize the filters since they are constrained to lie on the sphere. 

\begin{figure*}[t!]
\centering
\includegraphics[trim={0.85cm 1.5cm 0cm 0cm},clip,width=0.8\linewidth]{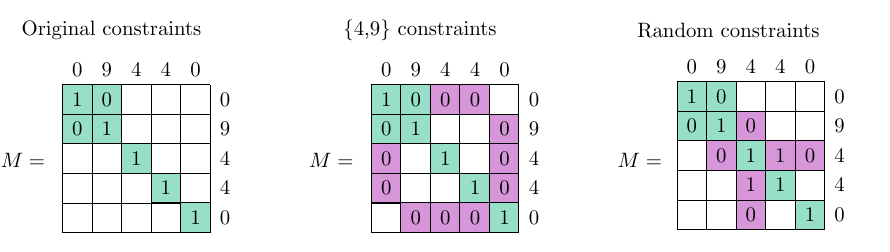}
\caption{\label{fig:constraints} Illustration of the kinds of additional constraints that were added. Green denotes the original constraints while purple denotes the constraints that were added. The numbers outside of the grids denote the true labels.}
\end{figure*}

When no labeled data exists we consider the hyperparameters in the same manner as during the hold-out validation. First we consider the values $2^i$ for $i=-10,-9,\dots, 5$ for the semi-supervised learning rate. Next we consider the values $2^i$ for $i=-40,-39,\dots,0$ for $\lambda$. Finally, if applicable, we consider the values $2^i$ for $i=-10,-9,\dots, 10$ for $\reg$.
We report the best performance observed on the test set. Developing a method for tuning the hyperparameters on an unlabeled validation set is left for future work.

\paragraph{Comparison details.}
\label{app:training_competitors}
In the comparisons we substitute our matrix balancing method with alternative labeling methods and retain the remainder of the XSDC algorithm. The pseudo-labeling code is our own, but we used code from \citet{caron2018} to implement the $k$-means version of deep clustering.\footnote{Their code may be found here: \url{https://github.com/facebookresearch/deepcluster}.} Two important details regarding the implementations are as follows. First, for pseudo-labeling when some of the data is labeled we estimate $W$ and $b$ based on the labeled data in the current mini-batch, as that is what is done in XSDC.  When labeled data is not present we estimate $W$ and $b$ based on the cluster assignments for the entire dataset. Second, for deep clustering we modify the dimension of the dimensionality reduction. In the original implementation the authors performed PCA, reducing the dimensionality of the features output by the network to 256. As the features output by the networks we consider have dimension less than 256, we instead keep the fewest number of components that account for 95\% of the variance. 

We perform the parameter tuning as follows. First, we follow the tuning procedure as detailed in Section~\ref{sec:parameter_tuning}. For pseudo-labeling there are no additional parameters to tune. However, for deep clustering there are two additional parameters to tune: the number of clusters in $k$-means and the number of iterations between cluster updates. During the initial parameter tuning stage these parameters are set to the true number of clusters $k$, and 50 iterations, respectively. Afterward we tune these two remaining parameters sequentially. We first tune the number of clusters over the values $k, 2k, 4k, 8k, 16k, 32k$ where $k$ is the true number of clusters. We then tune the number of iterations between cluster updates over the values $10, 25, 50, 100.$

\subsection{Additional constraints}
\label{sec:addl_constraints}
In one set of experiments we examine the effect of adding additional constraints. We consider two types of constraints: (1) constraints based on knowledge of whether the label was in the set $\{4,9\}$ or not; and (2) random correct must-link and must-not-link constraints among pairs of unlabeled observations and random correct must-not-link constraints between pairs of unlabeled and labeled observations. 

The two types of constraints are illustrated in Figure~\ref{fig:constraints}. Each grid point $(i,j)$, if filled, denotes whether observations $i$ and $j$ have the same label (1) or not (0). The true labels are the values outside of the grids. Green backgrounds correspond to knowing the labels corresponding to $(i,j)$. Purple backgrounds denote the additional known constraints. The left-most panel gives an example of an initial matrix $M$ in which the labels corresponding to the first two observations are known ($0$ and $9$). The second panel shows the entries we can fill in once we know whether each observation belongs to the set $\{4,9\}$. Finally, the third panel shows random correct constraints. The constraint at entry $(2,3)$ is a must-not-link constraint, whereas the constraint at entry $(3,4)$ is a must-link constraint.

\end{document}